\newtheorem{theorem}{Theorem}
\newtheorem{definition}{Definition}
\newtheorem{proposition}{Proposition}
\DeclareMathOperator*{\gequal}{\geq}
\DeclareMathOperator*{\equal}{=}
\DeclareMathOperator{\B}{\mathbb{B}}
\DeclareMathOperator{\R}{\mathbb{R}}
\DeclareMathOperator{\X}{\mathcal{X}}
\DeclareMathOperator{\nll}{\textrm{NLL}}
\def\partialbox{
    \tikz\draw[path picture={\fill[black] (path picture bounding box.north east)
  -- (path picture bounding box.south west) |-cycle;}] (0,0) rectangle  ++ (0.25,0.25);
}
\def\partialbox{
    \tikz\draw[path picture={\fill[black] (path picture bounding box.north east)
  -- (path picture bounding box.south west) |-cycle;}] (0,0) rectangle  ++ (0.25,0.25);
}
\newcommand{\proofoutlinename}{Proof -- Detailed Outline}
\def\dottedbox{\tikz\node[draw=black,dotted] {\phantom{}};}
\newcommand{\proofideaname}{Proof idea}
\patchcmd{\NAT@test}{\else \NAT@nm}{\else \NAT@nmfmt{\NAT@nm}}{}{}
\DeclareRobustCommand\citepos
  \let\NAT@nmfmt\NAT@posfmt% ...except with a different name format
\let\NAT@ctype\z@\NAT@partrue
\let\NAT@orig@nmfmt\NAT@nmfmt
\def\NAT@posfmt#1{\NAT@orig@nmfmt{#1's}}
\title{Log-Linear-Time Gaussian Processes\\Using Binary Tree Kernels}
\author{%
%   David S.~Hippocampus\thanks{Use footnote for providing further information
%     about author (webpage, alternative address)---\emph{not} for acknowledging
%     funding agencies.} \\
%   Department of Computer Science\\
%   Cranberry-Lemon University\\
%   Pittsburgh, PA 15213 \\
%   \texttt{hippo@cs.cranberry-lemon.edu} \\
  Michael K. Cohen \\
%   Department of Engineering Science\\
  University of Oxford \\
  \texttt{michael.cohen@eng.ox.ax.uk} \\
  % examples of more authors
   \And
  Samuel Daulton \\
%   Department of Engineering Science\\
  University of Oxford, Meta \\
  \texttt{sdaulton@meta.com} \\
    \And
  Michael A. Osborne \\
%   Department of Engineering Science\\
  University of Oxford \\
  \texttt{mosb@robots.ox.ax.uk} \\
  % \AND
  % Coauthor \\
  % Affiliation \\
  % Address \\
  % \texttt{email} \\
  % \And
  % Coauthor \\
  % Affiliation \\
  % Address \\
  % \texttt{email} \\
  % \And
  % Coauthor \\
  % Affiliation \\
  % Address \\
  % \texttt{email} \\
}
\begin{document}

\maketitle

\begin{abstract}
Gaussian processes (GPs) produce good probabilistic models of functions, but most GP kernels require $O((n+m)n^2)$ time, where $n$ is the number of data points and $m$ the number of predictive locations.
We present a new kernel that allows for Gaussian process regression in $O((n+m)\log(n+m))$ time.
Our ``binary tree'' kernel places all data points on the leaves of a binary tree, with the kernel depending only on the depth of the deepest common ancestor. 
We can store the resulting kernel matrix in $O(n)$ space in $O(n \log n)$ time, as a sum of sparse rank-one matrices, and approximately invert the kernel matrix in $O(n)$ time. 
Sparse GP methods also offer linear run time, but they predict less well than higher dimensional kernels. On a classic suite of regression tasks, we compare our kernel against Mat\'ern, sparse, and sparse variational kernels.
The binary tree GP assigns the highest likelihood to the test data on a plurality of datasets, usually achieves lower mean squared error than the sparse methods, and often ties or beats the Mat\'ern GP.
On large datasets, the binary tree GP is fastest, and much faster than a Mat\'ern GP.
\end{abstract}

\section{Introduction}

Gaussian processes (GPs) can be used to perform regression with high-quality uncertainty estimates, but they are slow. Na\"ively, GP regression requires $O(n^3 + n^2m)$ computation time and $O(n^2)$ computation space when predicting at $m$ locations given $n$ data points \citep{williams2006gaussian}. A kernel matrix of size $n \times n$ must be inverted (or Cholesky decomposed), and then $m$ matrix-vector multiplications must be done with that inverse matrix (or $m$ linear solves with the Cholesky factors).
% \citet{wang2019exact} recently showed that using a preconditioned conjugate gradient method for performing linear solves, matrix inversion (or Cholesky factorization) could be avoided, allowing $O(n^2m)$-time GP regression. In theory, finding a decent preconditioner could take $O(n^3)$ time, but in practice, it can be done faster. These methods can be applied to any choice of kernel.
A few methods that we will discuss later achieve $O(n^2m)$ time complexity \citep{wang2019exact,zhang2005time}.

With special kernels, GP regression can be faster and use less space. Inducing point methods, using $z$ inducing points, allow regression to be done in $O(z^2(n+m))$ time and in $O(z^2 + zn)$ space \citep{quinonero2005unifying, snelson2005,titsias09, hensman13}.
% The kernels underlying such methods are often thought of being approximations to other arbitrary kernels.
We will discuss the details of these inducing point kernels later, but they are kernels in their own right, not just approximations to other kernels. Unfortunately, these kernels are low dimensional (having a $z$-dimensional Hilbert space), which limits the expressivity of the GP model.
% Unfortunately, if we have observed the function value at a location $x$, and have to predict the function value at that location $x$, these inducing point kernels do not automatically predict the observed value at $x$ with very little uncertainty (unless $x$ is one of our inducing points). This is the easiest possible prediction problem---simple recall. Inducing point methods fail it.
% Unfortunately, with this kernel, with data points and prediction locations sampled from some distribution, as $n \to \infty$, there are prediction locations where the log likelihood assigned to the truth is bounded above by a constant. In general, if the predictive target is not \textit{at} one of the inducing points, the predictive variance will not approach 0, no matter how much data is available.

We present a new kernel, the \textit{binary tree kernel}, that also allows for GP regression in $O(n+m)$ space and $O((n + m)\log(n+m))$ time (both model fitting and prediction). 
The time and space complexity of our method is also linear in the depth of the binary tree, which is na\"ively linear in the dimension of the data, although in practice we can increase the depth sublinearly. Training some kernel parameters takes time quadratic in the depth of the tree.
% Unlike the inducing point kernel, as the amount of data increases, the predictive variance approaches $0$ everywhere.
The dimensionality of the binary tree kernel is exponential in the depth of the tree, making it much more expressive than an inducing points kernel. Whereas for an inducing points kernel, the runtime is quadratic in the dimension of the Hilbert space, for the binary tree kernel, it is only logarithmic---an exponential speedup.
% Unlike inducing point kernels, with the binary tree kernel, if $x$ is both a training data point and a predictive location, it will produce an accurate and appropriately confident prediction.
% The fact that inducing point kernels fail this sanity check should make us suspicious of their performance in general.
% The fact that inducing point kernels fail this sanity check should make us suspicious of the ability of an inducing point method to learn anything from the marginal data point in the big data regime.

\begin{wrapfigure}{R}{0.5\linewidth}
    \vspace*{-2mm}
    \includegraphics[width=\linewidth]{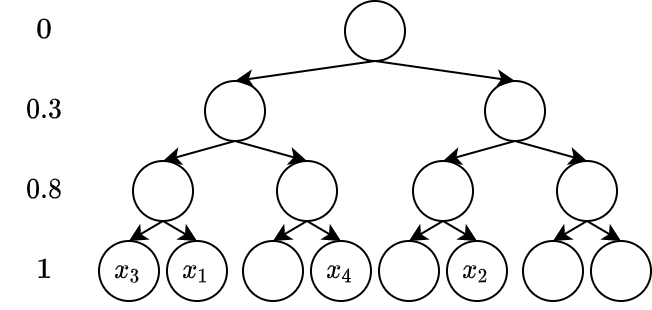}
    \caption{A binary tree kernel with four data points. In this example, $k(x_1, x_1) = 1$, $k(x_1, x_2) = 0$, $k(x_1, x_3) = 0.8$, and $k(x_1, x_4) = 0.3$.}
    \label{fig:tree}
    \vspace*{-5mm}
\end{wrapfigure}

A simple depiction of our kernel is shown in Figure~\ref{fig:tree}, which we will define precisely in Section \ref{sec:kernel}. First, we create a procedure for placing all data points on the leaves of a binary tree.  Given the binary tree, the kernel between two points depends only on the depth of the deepest common ancestor. Because very different tree structures are possible for the data, we can easily form an ensemble of diverse GP regression models.
% Here is a schematic sample from a binary tree kernel, where the data is one dimensional, and the binary expansion determines the tree structure: \includegraphics[width=0.5\linewidth,height=7mm]{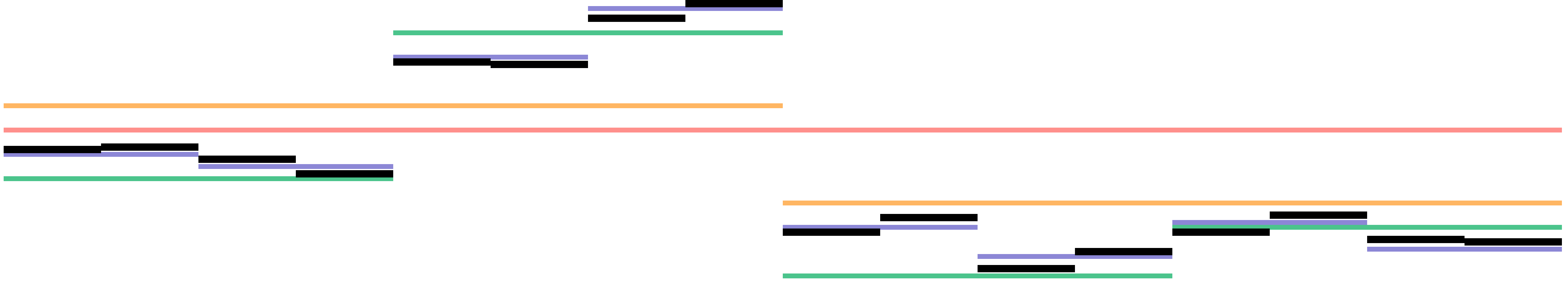}. The full figure appears in Appendix \ref{sec:fig}. The function draw is in black, and other colors show what the function draw would have been if the tree were shallower.
Figure \ref{fig:kernelpicture} depicts a schematic sample from a binary tree kernel. Note how the posterior mean is piecewise flat, but the pieces can be small.
% Tree-based models are a well-known tool in machine learning \citep{venables1999tree}.
\begin{figure}[b]
    \centering
    \vspace*{-2mm}
    \includegraphics[width=\textwidth,height=15mm]{}
    \caption{A schematic diagram of a function sampled from a binary tree kernel. The function is over the interval [0, 1], and points on the interval are placed onto the leaves of a depth-4 binary tree according to the first 4 bits of their binary expansion. The sampled function is in black. Purple represents the sample if the tree had depth 3, green depth 2, orange depth 1, and red depth 0.}
    \label{fig:kernelpicture}
\end{figure}
% Figure \ref{fig:draw} depicts a schematic sample from a binary tree kernel.
% \begin{figure}
% \centering
% \begin{minipage}{.48\textwidth}
%   \centering
%   \includegraphics[width=0.9\linewidth]{BinaryTreeKernel.png}
%     \caption{A binary tree kernel with four data points. In this example, $k(x_1, x_1) = 1$, $k(x_1, x_2) = 0$, $k(x_1, x_3) = 0.8$, and $k(x_1, x_4) = 0.3$.}
%     \label{fig:tree}
% \end{minipage}%
% \hfill
% \begin{minipage}{.48\textwidth}
%   \centering
%   \includegraphics[width=\linewidth]{binarytreedraw.png}
%     \caption{Sample from a binary tree kernel, where the data is one dimensional, and the binary expansion determines the tree structure. The function draw is in black, and other colors show what the function draw would have been if the tree were shallower.}
%     \label{fig:draw}
% \end{minipage}
% \end{figure}

On a standard suite of benchmark regression tasks \citep{wang2019exact}, 
%and using the same random seeds for splitting train and test data, 
we show that our kernel usually achieves better negative log likelihood (NLL) than state-of-the-art sparse methods and conjugate-gradient-based ``exact'' methods, at lower computational cost in the big-data regime.

There are not many limitations to using our kernel. 
The main limitation is that other kernels sometimes capture the relationships in the data better.
% -- no one prior is universally best. 
We do not have a good procedure for understanding when data has more Mat\'ern character or more binary tree character (except through running both and comparing training NLL). But given that the binary tree kernel usually outperforms the Mat\'ern, we'll tentatively say the best first guess is that a new dataset has more binary tree character. One concrete limitation for some applications, like Bayesian Optimization, is that the posterior mean is piecewise-flat, so gradient-based heuristics for finding extrema would not work.

In contexts where a piecewise-flat posterior mean is suitable, we struggle to see when one would prefer a sparse or sparse variational GP to a binary tree kernel. The most thorough approach would be to run both and see which has a better training NLL, but if you had to pick one, the binary tree GP seems to be better performing and comparably fast. If minimizing mean-squared error is the objective, the Mat\'ern kernel seems to do slightly better than the binary tree. If the dataset is small, and one needs a very fast prediction, a Mat\'ern kernel may be the best option. But otherwise, if one cares about well-calibrated predictions, these initial results we present tentatively suggest using a binary tree kernel over the widely-used Mat\'ern kernel.

The log-linear time and linear space complexity of the binary tree GP, with performance \textit{exceeding} a ``normal'' kernel, could profoundly expand the viability of GP regression to larger datasets.

% We introduce GP Regression in Section \ref{sec:prelim} and our kernel in Section \ref{sec:kernel}. In Section \ref{sec:sros} we develop an $O(n)$-space representation for a certain kind of linear operator, which allows $O(n)$-time matrix-vector multiplication and $O(n)$-time approximate inversion. In Section \ref{sec:gp}, we show how to do GP regression with our kernel in $O(n + m)$ time using the matrix representation from the previous section. In Section \ref{sec:review}, we review related work, and in Section \ref{sec:results}, we compare our kernel empirically to existing fast methods.

\section{Preliminaries} \label{sec:prelim}

Our problem setting is regression. Given a function $f : \X \to \R$, for some arbitrary set $\X$, we would like to predict $f(x)$ for various $x \in \X$. What we have are observations of $f(x)$ for various (other) $x \in \X$. Let $X \in \X^n$ be an $n$-tuple of elements of $\X$, and let $y \in \R^n$ be an $n$-tuple of real numbers, such that $y_i \sim f(X_i) + \mathcal{N}(0, \lambda)$, for $\lambda \in \R^{\ge 0}$. $X$ and $y$ comprise our training data.

With an $m$-tuple of test locations $X' \in \X^m$, let $y' \in \R^m$, with $y'_i = f(X'_i)$. $y'$ is the ground truth for the target locations. Given training data, we would like to produce a distribution over $\R$ for each target location $X_i'$, such that it assigns high marginal probability to the unknown $y_i'$. Alternatively, we sometimes would like to produce point estimates $\hat{y}'_i$ in order to minimize the squared error $(\hat{y}'_i - y'_i)^2$.

% To any $n$-tuple of points $x \in \X$, a GP (over the input space $\X$) assigns a $n$-dimensional multivariate Gaussian distribution.
% This multivariate Gaussian represents a distribution of possible function values at those $n$ points. Typically, if two points in this tuple are close, the multivariate Gaussian produced will have a high covariance between the two dimensions corresponding to those two points.
A GP prior over functions is defined by a mean function $m : \X \to \R$, and a kernel $k : \X \times \X \to \R$. The expected function value at a point $x$ is defined to be $m(x)$, and the covariance of the function values at two points $x_1$ and $x_2$ is defined to be $k(x_1, x_2)$.
% Thus, for $n + m$ points, we can construct a covariance matrix for the function values at those points. If we know the function values at $n$ of those points, we can condition the multivariate Gaussian on those values, and discover a multivariate Gaussian over the function values at the remaining $m$ points. The calculation goes as follows.
Let $K_{XX} \in \R^{n \times n}$ be the matrix of kernel values $(K_{XX})_{ij} = k(X_i, X_j)$, and let $m_X \in \R^n$ be the vector of mean values $(m_X)_i = m(X_i)$. For a GP to be well-defined, the kernel must be such that $K_{XX}$ is positive semidefinite for any $X \in \X^n$. For a point $x \in \X$, Let $K_{Xx} \in \R^n$ be the vector of kernel values: $(K_{Xx})_i = k(X_i, x)$, and let $K_{xX} = K_{Xx}^\top$. Let $\lambda \geq 0$ be the variance of observation noise. Let $\mu_x$ and $\sigma^2_x$ be the mean and variance of our posterior predictive distribution at $x$. Then, with $K_{XX}^{\lambda\textrm{inv}} = (K_{XX} + \lambda I)^{-1}$,
\vspace*{-3mm}
\begin{tabularx}{\linewidth}{@{}XX@{}}
\vspace*{-5mm}
\begin{equation}
  \mu_x := (y - m_X)^\top K_{XX}^{\lambda\textrm{inv}} K_{Xx} + m(x) \label{eqn:predmean}
\end{equation}
&
\vspace*{-5mm}
\begin{equation}
  \sigma^2_x := k(x, x) - K_{xX} K_{XX}^{\lambda\textrm{inv}} K_{Xx} + \lambda. \label{eqn:predvar}
\end{equation}
\end{tabularx}
% \begin{align}
%     \mu_x &:= (y - m_X)^\top (K_{XX} + \lambda I)^{-1} K_{Xx} + m(x) \label{eqn:predmean}
%     \\
%     \sigma^2_x &:= k(x, x) - K_{xX} (K_{XX} + \lambda I)^{-1} K_{Xx} + \lambda \label{eqn:predvar}
% \end{align}
See \citet{williams2006gaussian} for a derivation. We compute Equations \ref{eqn:predmean} and \ref{eqn:predvar} for all $x \in X'$.
% It is also possible to produce a joint predictive covariance matrix for all predictive locations, but often on only GP's point-wise marginals are used \citep{wang2019exact}.

\section{Binary tree kernel} \label{sec:kernel}
We now introduce the binary tree kernel. First, 
% we construct a kernel over fixed-length binary strings; then, 
we encode our data points as binary strings. So we have $\X = \B^q$, where $\B = \{0, 1\}$, and $q \in \mathbb{N}$.

% If, initially, 
If
$\X = \R^d$, we must map $\R^d \mapsto \B^q$. First, we rescale all points (training points and test points) to lie within the box $[0, 1]^d$. (If we have a stream of test points, and one lands outside of the box $[0, 1]^d$, we can either set $K_{xX}$ to $\mathbf{0}$ for that point or we rescale and retrain in $O(n \log n)$ time.) Then, for each $x \in [0, 1]^d$, for each dimension, we take the binary expansion up to some precision $p$, and for those $d \times p$ bits, we permute them using some fixed permutation. We call this permutation the bit order, and it is the same for all $x \in [0, 1]^d$.
% That is, if the first bit of the third dimension comes first for one $x \in [0, 1]^d$, it does so for all $x$.
Note that now $q=dp$. See Figure \ref{fig:bitorder} for an example. We optimize the bit order during training, and we can also form an ensemble of GPs using different bit orders.
\begin{wrapfigure}{R}{0.3\linewidth}
    \vspace*{-3mm}
    \centering
    \includegraphics[width=\linewidth]{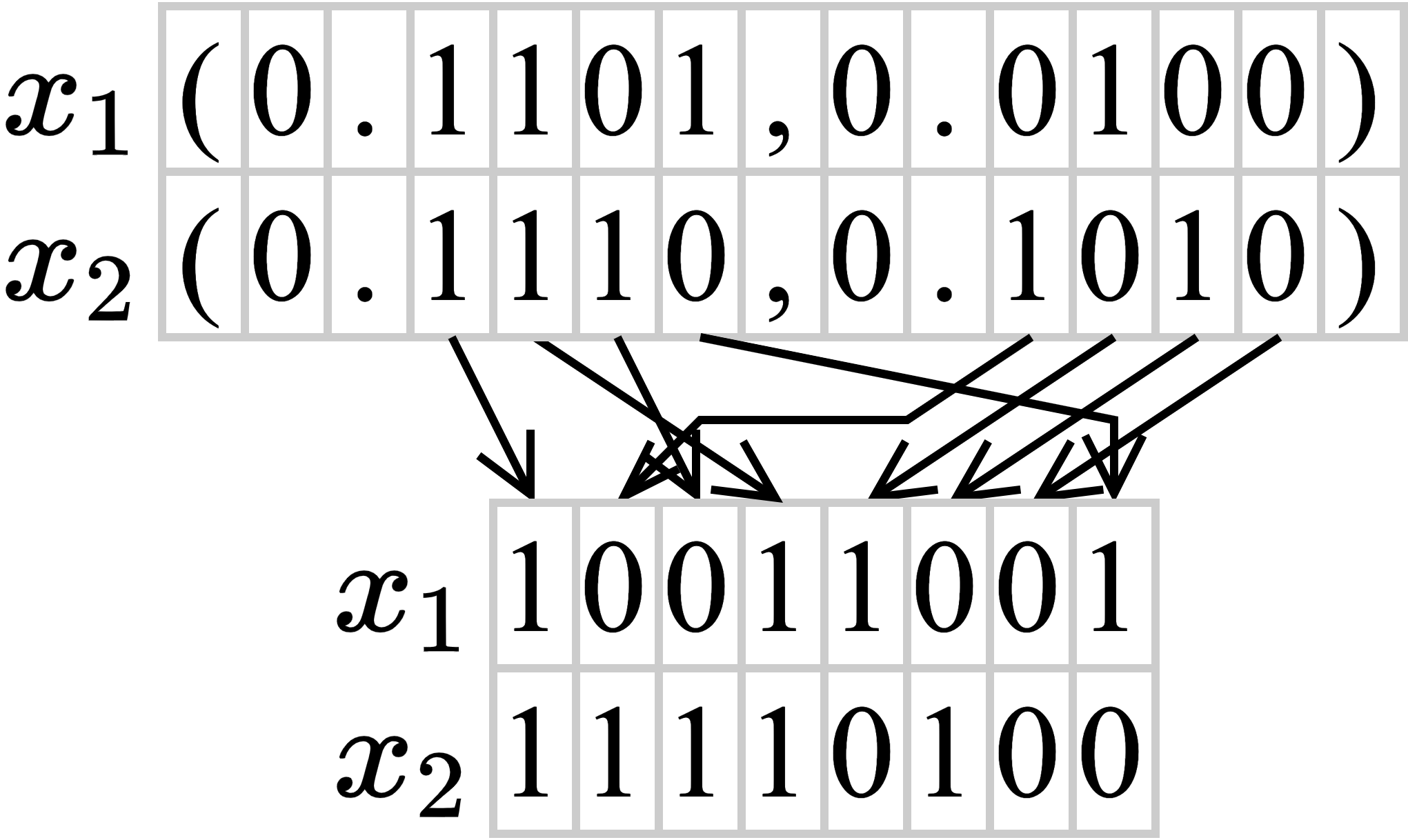}
    \vspace*{-4mm}
    \caption{Function from $[0, 1]^2 \to \B^8$.}
    \label{fig:bitorder}
    \vspace*{-9mm}
\end{wrapfigure}

For $x \in \B^q$, let $x^{\leq i}$ be the first $i$ bits of $x$. $[\![\textrm{expression}]\!]$ evaluates to 1, if expression is true, otherwise 0. We now define the kernel:
\begin{definition}[Binary Tree Kernel]
Given a weight vector $w \in \R^q$, with $w \succeq 0$ and $||w||_1 = 1$,
\begin{equation*}
    k_w(x_1, x_2) = \sum_{i = 1}^q w_i \left[\!\left[x_1^{\leq i} = x_2^{\leq i} \right]\!\right]
\end{equation*}
% \begin{equation*}
%     k_w(x, x') = \sum_{i = 1}^q w_i \prod_{j=1}^i \left[\!\left[x_j = x'_j \right]\!\right]
% \end{equation*}
\end{definition}
So the more leading bits shared by $x_1$ and $x_2$, the larger the covariance between the function values. Consider, for example, points $x_1$ and $x_4$ from Figure \ref{fig:tree}, where $x_1$ is $($left, left, right$)$, and $x_4$ is $($left, right, right$)$; they share only the first leading ``bit''. We train the weight vector $w$ to maximize the likelihood of the training data.
\begin{proposition}[Positive Semidefiniteness] \label{prop:psd} For $X \in \X^n$, for $k = k_w$, $K_{XX} \succeq 0$.
\end{proposition}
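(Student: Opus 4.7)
The plan is to show that each summand defining $k_w$ is itself a positive semidefinite kernel, and then use the fact that a non-negative linear combination of PSD matrices is PSD.

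First I would fix $i \in \{1,\dots,q\}$ and consider the indicator kernel $k^{(i)}(x_1,x_2) = [\![x_1^{\leq i} = x_2^{\leq i}]\!]$. The relation ``having the same first $i$ bits'' is an equivalence relation on $\B^q$, partitioning it into at most $2^i$ classes indexed by prefixes $b \in \B^i$. I would exhibit an explicit feature map $\phi_i : \B^q \to \R^{2^i}$ given by $\phi_i(x) = e_{x^{\leq i}}$, the standard basis vector corresponding to $x$'s prefix. Then $\langle \phi_i(x_1), \phi_i(x_2)\rangle = [\![x_1^{\leq i} = x_2^{\leq i}]\!] = k^{(i)}(x_1,x_2)$, so for any $X \in \X^n$, the Gram matrix $(K^{(i)}_{XX})_{jk} = \phi_i(X_j)^\top \phi_i(X_k)$ equals $\Phi_i \Phi_i^\top$ where $\Phi_i$ stacks the $\phi_i(X_j)$ as rows. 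Any such Gram matrix is PSD since $v^\top \Phi_i \Phi_i^\top v = \|\Phi_i^\top v\|_2^2 \geq 0$.

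Then I would assemble the result: since $k_w(x_1,x_2) = \sum_{i=1}^q w_i k^{(i)}(x_1,x_2)$, the kernel matrix decomposes as $K_{XX} = \sum_{i=1}^q w_i K^{(i)}_{XX}$. Using $w_i \geq 0$ (from $w \succeq 0$) and that each $K^{(i)}_{XX}$ is PSD, for any $v \in \R^n$,
\begin{equation*}
v^\top K_{XX} v = \sum_{i=1}^q w_i \, v^\top K^{(i)}_{XX} v \geq 0,
\end{equation*}
which gives $K_{XX} \succeq 0$. The normalization $\|w\|_1 = 1$ is not needed for this proposition; it only fixes the scale $k(x,x) = 1$.

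There is no real obstacle here: the proof is essentially an observation that an equivalence-relation indicator is a Gram matrix of one-hot features, combined with closure of the PSD cone under non-negative combinations. The only thing to be slightly careful about is making the feature map and the resulting factorization $K^{(i)}_{XX} = \Phi_i\Phi_i^\top$ explicit, since this is exactly the ``sum of sparse rank-one matrices'' representation the abstract alludes to for efficient storage and inversion.
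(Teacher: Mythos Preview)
Your proof is correct and matches the paper's approach: the paper writes $K_{XX} = \sum_{i=1}^q \sum_{s\in\B^i} w_i X_{[s]}X_{[s]}^\top$ where $X_{[s]}$ is the indicator vector of data points with prefix $s$, which is exactly your factorization $K^{(i)}_{XX} = \Phi_i\Phi_i^\top$ expanded column by column (the columns of your $\Phi_i$ are precisely the $X_{[s]}$ for $s\in\B^i$). Your remark that $\|w\|_1=1$ is unnecessary here is also correct.
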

\begin{proof}
Let $s \in \bigcup_{i = 1}^q \B^i$ be a binary string, and let $|s|$ be the length of $s$. Let $X_{[s]} \in \R^n$ with $(X_{[s]})_j = \left[\!\left[X_j^{\leq |s|} = s \right]\!\right]$. $X_{[s]} X_{[s]}^\top$ is clearly positive semidefinite. Finally,
% \begin{equation} \label{eqn:sros}
%     K_{XX} = \sum_{i = 1}^q \sum_{s \in \B^i} w_i X_{[s]} X_{[s]}^\top
% \end{equation}
$K_{XX} = \sum_{i = 1}^q \sum_{s \in \B^i} w_i X_{[s]} X_{[s]}^\top$,
and recall $w_i \geq 0$, so $K_{XX} \succeq 0$.
\end{proof}

\section{Sparse rank one sum representation} \label{sec:sros}
In order to do GP regression in $O(n)$ space and $O(n \log n)$ time, we develop a ``Sparse Rank One Sum'' representation of linear operators (SROS). This was developed separately from the very similar Hierarchical matrices \citep{bebendorf2008hierarchical}, which we discuss below. In SROS form, linear transformation of a vector can be done in $O(n)$ time instead of $O(n^2)$. We will store our kernel matrix and inverse kernel matrix in SROS form. The proof of Proposition \ref{prop:psd} exemplifies representing a matrix as the sum of sparse rank one matrices. Note that each $X_{[s]}$ is sparse---if $q$ is large, most $X_{[s]}$'s are the zero vector.

We now show how to interpret an SROS representation of an $n \times n$ matrix. Let $[n] = \{1, 2, ..., n\}$. For $r \in \mathbb{N}$, let $L : [r]^n \times [r]^n \times \R^n \times \R^n \to \R^{n \times n}$ construct a linear operator from four vectors.

\begin{definition}[Linear Operator from Simple SROS Representation] \label{def:sros}
Let $p, p' \in [r]^n$, and let $u, u' \in \R^n$. For $l \in [r]$, let $u^{p = l} \in \R^n$ be the vector where $u^{p = l}_j = u_j [\![p_j = l]\!]$, likewise for $u'$ and $p'$.
Then:
$L(p, p', u, u') \mapsto \sum_{i = 1}^{m} u^{p = i} {(u')^{p' = i}}^\top$.
% \begin{equation*}
%     L(p, p', u, u') \mapsto \sum_{l = 1}^{r} (u^{p = l}) ((u')^{p' = l})^\top.
% \end{equation*}
\end{definition}
\begin{wrapfigure}{R}{0.36\linewidth}
    % \vspace*{-4mm}
    \centering
    \includegraphics[width=\linewidth]{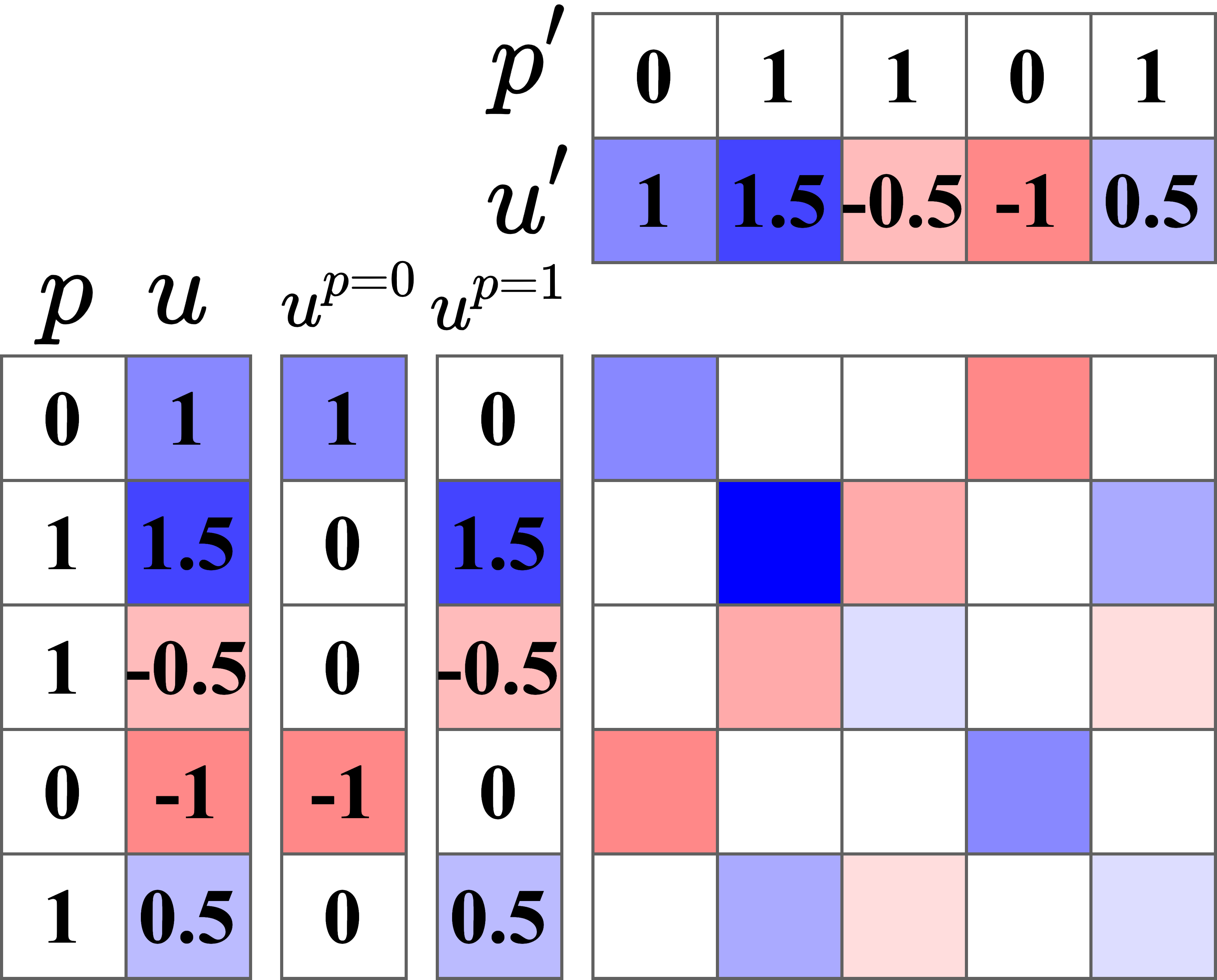}
    \caption{A matrix in standard form constructed from a matrix in SROS form. The large square depicts the matrix $L(p, p', u, u') \in \R^{5 \times 5}$ with elements colored by value. See $u^{p=0}$ for a color legend.}
    \label{fig:sros}
    \vspace*{-4mm}
\end{wrapfigure}
We depict Definition \ref{def:sros} in Figure \ref{fig:sros}. $p$ and $p'$ represent partitions over $n$ elements: all elements with the same integer value in the vector $p$ belong to the same partition. Note that $r$, the number of parts in the partition, need not exceed $n$, the number of elements being partitioned. If $p = p'$ (which is almost always the case for us) and the elements of $p$, $u$, and $u'$ were shuffled so that all elements in the same partition were next to each other, then $L(p, p', u, u')$ would be block diagonal. Note that $L(p, p', u, u')$ is not necessarily low rank. If $p$ is the finest possible partition, and $p = p'$, $L(p, p', u, u')$ is diagonal. SROS matrices can be thought of as a generalization of two types of matrix that are famously amenable to fast computation: rank one matrices (all points in the same partition) and diagonal matrices (each point in its own partition).

We now extend the definition of $L$ to allow for multiple $p$, $p'$, $u$, and $u'$ vectors.
\begin{definition}[Linear Operator from SROS Representation]
Let $L : [r]^{n \times q} \times [r]^{n \times q} \times \R^{n \times q} \times \R^{n \times q} \to \R^{n \times n}$. Let $P, P' \in [r]^{n \times q}$, and let $U, U' \in \R^{n \times q}$. Let $P_{:, i}$, $U_{:, i}$, etc. be the $i$\textsuperscript{th} columns of the respective arrays.
Then: $L(P, P', U, U') \mapsto \sum_{i = 1}^q L(P_{:, i}, P'_{:, i}, U_{:, i}, U'_{:, i})$.
% \begin{equation*}
%     L(P, P', U, U') \mapsto \sum_{i = 1}^q L(P_{:, i}, P'_{:, i}, U_{:, i}, U'_{:, i})
% \end{equation*}
\end{definition}

Algorithm \ref{alg:matvec} performs linear transformation of a vector using SROS representation in $O(nq)$ time.
% We write the algorithm using standard mathematical notation, alongside a Pytorch implementation, which can be fully vectorized, but is not as written.
% Let $\odot$ denote element-wise multiplication of vectors (or the Hadamard product).
\begin{algorithm}
\caption{Linear Transformation with SROS Linear Operator. This can be vectorized on a Graphical Processing Unit (GPU), using e.g. \texttt{torch.Tensor.index\_add\_} for Line \ref{lin:indexaddmatvec} and non-slice indexing for Line \ref{lin:fancyindexingmatvec} \citep{pytorch}. Slight restructuring allows vectorization over $[q]$ as well.}
\label{alg:matvec}
\begin{algorithmic}[1]
\Require $P, P' \in [r]^{n \times q}$, $U, U' \in \R^{n \times q}$, $x \in \R^n$
\Ensure $y = L(P, P', U, U') x$
\State $y \gets \mathbf{0} \in \R^n$
\For{$i \in [q]$} \Comment{$O(nq)$ time}
    \State $p, p', u, u' \gets P_{:, i}, P'_{:, i}, U_{:, i}, U'_{:, i}$
    % \Comment{just for convenience}
    \State $z \gets \mathbf{0} \in \R^r$ \Comment{$z_l$ will store the dot product $((u')^{p' = l})^\top x^{p' = l}$}
    \For{$j \in [n]$}
         $z_{p'_j} \gets z_{p'_j} + u'_j x_j$ \Comment{$O(n)$ time} \label{lin:indexaddmatvec}
    \EndFor
    \For{$j \in [n]$}
         $y_j \gets y_j + z_{p_j} u_j$ \Comment{$O(n)$ time} \label{lin:fancyindexingmatvec}
    \EndFor
\EndFor
\Return $y$
\end{algorithmic}

% \begin{lstlisting}
% def matvec(row_partitions, col_partitions, row_vecs, col_vecs,
%           vector):
%     dot_products = torch.empty_like(vector)
%     out = torch.zeros_like(vector)
%     for col in range(data_reduced.shape[1]):
%         dot_products.fill_(0)
%         dot_products.index_add_(0, col_partitions[:, col],
%                                 vector * col_vecs[:, col])
%         # torch.Tensor.index_add_ is fast on a GPU
%         out += dot_products[row_partitions[:, col]] *
%               row_vecs[:, col]
%     return out
% \end{lstlisting}
\end{algorithm}

We now discuss how to approximately invert a certain kind of symmetric SROS matrix, but our methods could be extended to asymmetric matrices. First, we define a partial ordering over partitions. For two partitions $p, p'$, we say $p' \leq p$ if $p'$ is finer than or equal to $p$; that is, $p'_j = p'_{j'} \implies p_j = p_{j'}$.
Using that partial ordering, a symmetric SROS matrix can be approximately inverted efficiently if for all $1 \leq i, i' \leq q$, $P_{:, i} \leq P_{:, i'}$ or $P_{:, i'} \leq P_{:, i}$. As the reader may have recognized, our kernel matrix $K_{XX}$ can be written as an SROS matrix with this property.

We will write symmetric SROS matrices in a slightly more convenient form. All $(u')^{p = l}$ must be a constant times $u^{p = l}$. We will store these constants in an array $C$. Let $L(P, C, U)$ be shorthand for $L(P, P, U, C \odot U)$, where $\odot$ denotes element-wise multiplication. For $L(P, C, U)$ to be symmetric, it must be the case that $P_{ji} = P_{j'i} \implies C_{ji} = C_{j'i}$. Then, all elements of $U$ corresponding to a given $u^{p = l}$ are multiplied by the same constant. We now present an algorithm for calculating $(L(P, C, U) + \lambda I)^{-1}$, for $\lambda \neq 0$, which is an approximate inversion of $L(P, C, U)$. We have not yet analyzed numerical sensitivity for $\lambda \to 0$, but we conjecture that all floating point numbers involved need to be stored to at least $\log_2(1/\lambda)$ bits. Without loss of generality, let $\lambda = 1$, and note $(L(P, C, U) + \lambda I)^{-1} = \lambda^{-1} (L(P, \lambda^{-1}C, U) + I)^{-1}$.

By assumption, all columns of $P$ are comparable with respect to the partial ordering above, so we can reorder the columns of $P$ such that $P_{:, i} \geq P_{:, j}$ for $i < j$. The key identity that we use to develop our fast inversion algorithm is the Sherman--Morrison Formula:

% \begin{wrapfigure}{l}{.5\textwidth}
% \vspace*{-5mm}
\begin{equation}
    \label{eqn:rankoneupdate}
    (A + cuu^\top)^{-1} = A^{-1} - \frac{A^{-1} uu^\top A^{-1}}{c^{-1} + u^\top A^{-1} u}
\end{equation}
% \vspace*{-5mm}
% \end{wrapfigure}
% \vspace*{-3mm}
% \begin{equation}
%     \label{eqn:rankoneupdate}
%     (A + cuu^\top)^{-1} = A^{-1} - \left. A^{-1} uu^\top A^{-1} \middle/ (c^{-1} + u^\top A^{-1} u) \right.
% \end{equation}

Starting with $A = I$, we add the sparse rank one matrices iteratively, from the finest partition to the coarsest one, updating $A^{-1}$ as we go. We represent $(L(P, C, U) + I)^{-1}$ in the form $I + L(P, C', U')$, so we write an algorithm that returns $C'$ and $U'$. We can also quickly calculate $\log |L(P, C, U) + I|$ at the same time, using the matrix determinant lemma: $|A + cuu^\top| = (1 + c u^\top A^{-1} u) |A|$.

\begin{theorem}[Fast Inversion] % if L(P_{:, i+1:q}, C_{:, i+1:q}, U_{:, i+1:q}) + I is invertible
    \label{thm:inv}
    For $P \in [r]^{n \times q}$ and $C, U \in \R^{n \times q}$, if $P_{:, i} \gequal\limits^{\textrm{(is coarser than)}} P_{:, j}$ for $i < j$, then there exists $C', U' \in \R^{n \times q}$, such that $(L(P, C, U) + I)^{-1} = I + L(P, C', U')$. There exists an algorithm for computing $C'$ and $U'$ that takes $O(nq^2)$ time.
\end{theorem}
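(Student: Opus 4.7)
\begin{proofidea}
The plan is to build $(L(P,C,U) + I)^{-1}$ incrementally by processing the rank-one constituents of $L(P,C,U)$ one at a time via the Sherman--Morrison formula~\eqref{eqn:rankoneupdate}, maintaining the running inverse in SROS form throughout. Writing $p_i := P_{:,i}$ and $u_i := U_{:,i}$, and letting $c_{il}$ denote the (constant) value of $C_{:,i}$ on block $l$ of $p_i$, we have $L(P,C,U) = \sum_{i=1}^q \sum_{l} c_{il}\, u_i^{p_i=l}(u_i^{p_i=l})^{\!\top}$. I would process these summands from the finest partition (column $q$) to the coarsest (column $1$), maintaining the invariant that after folding in all columns with index $\geq i$, the running inverse equals $I + L(P_{:,i:q}, C'_{:,i:q}, U'_{:,i:q})$ for arrays $C',U'$ that are being populated column by column.

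The key structural observation is a \emph{block-locality} property: whenever we are about to process level $i$, and a vector $u$ is supported on a block $B$ of $p_i$, the current $A^{-1} u$ is also supported on $B$. This holds because each column $p_j$ with $j > i$ is finer than $p_i$ by hypothesis, so each of its blocks sits entirely inside some block of $p_i$; hence every rank-one term stored in the already-built columns of $A^{-1} - I$ is supported within a single block of $p_i$, and cannot leak mass out of $B$. Symmetry of $A^{-1}$ is preserved inductively, so $A^{-1} u u^{\!\top} A^{-1} = v v^{\!\top}$ with $v = A^{-1} u$.

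With this in hand the inductive step is transparent. Adding the term $c_{il}\, u_i^{p_i=l}(u_i^{p_i=l})^{\!\top}$ to the current $A$, Sherman--Morrison produces the correction $-\alpha\, v v^{\!\top}$ with $v = A^{-1} u_i^{p_i=l}$ and $\alpha = 1/(c_{il}^{-1} + (u_i^{p_i=l})^{\!\top} v)$. By block-locality, $v$ is supported on block $l$ of $p_i$, so this correction has exactly the shape of a rank-one summand indexed by $(i,l)$ in the SROS form. The blocks at level $i$ are pairwise disjoint, so the Sherman--Morrison updates within a single level commute and can be recorded simultaneously into column $i$ of $U'$ (setting $v$ on block $l$) and column $i$ of $C'$ (setting $-\alpha$ on block $l$), after which we advance to level $i-1$.

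The main obstacle is the time-complexity accounting. For block $B_l$ at level $i$, computing $v = A^{-1} u_i^{p_i=l}$ via Algorithm~\ref{alg:matvec} against the already-built columns $i+1,\ldots,q$ of the inverse touches only the $|B_l|$ rows of $B_l$, since every partition at those finer levels refines $p_i$ and block-locality propagates recursively; the cost is therefore $O(|B_l|(q-i))$. Summing $\sum_l |B_l| = n$ gives $O(n(q-i))$ per level, and summing over $i$ gives the stated $O(nq^2)$ bound. The $\log|L(P,C,U)+I|$ side-computation via the matrix-determinant lemma, if desired, adds $O(1)$ per Sherman--Morrison step and does not affect the asymptotics.
\end{proofidea}
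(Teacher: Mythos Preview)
Your proposal is correct and follows essentially the same approach as the paper's proof: both process columns from finest to coarsest via Sherman--Morrison, invoke the same block-locality observation (finer partitions sit inside coarser blocks, so $A^{-1}u^{p=l}$ retains the support of $u^{p=l}$), note that the disjointness of blocks at level $i$ lets all updates at that level be recorded into a single new SROS column, and arrive at the identical $O(n(q-i))$-per-level, $O(nq^2)$-total accounting. The paper's write-up is slightly more explicit about why the orthogonality of the $(u')^{p=l}$ vectors across different $l$ means the Sherman--Morrison updates within a level do not interfere, but your ``commute'' phrasing captures the same point.
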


\begin{proof}
For $X \in \R^{n \times q}$, let $X_{:, i+1:q} \in \R^{n \times (q - i)}$ be columns $i + 1$ through $q$ of matrix $X$ (inclusive). Let $A_i = I + L(P_{:, i+1:q}, C_{:, i+1:q}, U_{:, i+1:q})$, and $A_q = I$. Now suppose $A_i^{-1}$ can be written as $I + L(P_{:, i+1:q}, C'_{:, i+1:q}, U'_{:, i+1:q})$ for some $C'$ and $U'$. For the base case of $i=q$, this holds trivially. We show it also holds for $i - 1$, and we can compute $C'_{:, i:q}$, $U'_{:, i:q}$ in $O\bigl( n(q - i) \bigr)$ time. Let $p = P_{:, i}$, $u = U_{:, i}$, and $c = C_{:, i}$. Consider $u^{p = l}$, where each element is zero unless the corresponding element of $p$ equals $l$. What do we know about the product $A_i^{-1} u^{p = l}$ (as seen in Equation \ref{eqn:rankoneupdate})?

Because the columns of $P$ go from coarser partitions to finer ones, all of the vectors generating the sparse rank one components of $L(P_{:, i+1:q}, C'_{:, i+1:q}, U'_{:, i+1:q})$ are from partitions that are equal to or finer than $p$. Thus, they are either zero everywhere $u^{p = l}$ is zero, or zero everywhere $u^{p = l}$ is nonzero. Vectors $v$ of the second kind can be ignored, as $c v v^\top u^{p = l} = 0$. Thus, when multiplying $L(P_{:, i+1:q}, C'_{:, i+1:q}, U'_{:, i+1:q})$ by $u^{p = l}$, the only relevant vectors are filled with zeros except where the corresponding element of $p$ equals $l$. So we can get rid of those rows of $P_{:, i+1:q}$, $C'_{:, i+1:q}$, and $U'_{:, i+1:q}$. Suppose there are $n_l$ elements of $p$ that equal $l$. Then $L(P_{:, i+1:q}, C'_{:, i+1:q}, U'_{:, i+1:q}) u^{p = l}$ involves $n_l$ rows, and can be computed in $O\bigl( n_l(q - i) \bigr)$ time. Moreover, this product, which we'll call $(u')^{p = l}$, is only nonzero when the corresponding element of $p$ equals $l$, so it has the same sparsity pattern as $u^{p = l}$. The other component of $A_i^{-1}$ is the identity matrix, and $I u^{p = l}$ clearly has the same sparsity as $u^{p = l}$. Thus, returning to Equation \ref{eqn:rankoneupdate}, when we add $u^{p = l} (u^{p = l})^\top$ to $A_i$, we update $A_i^{-1}$ with an outer product of vectors whose sparsity pattern is the same as that of $u^{p = l}$.

For each $l$, $A_i^{-1}$ need not be updated with each $u^{p = l}$ one at a time. For $l \neq l'$, $u^{p = l}$ and $u^{p = l'}$ are nonzero at separate indices, so $u^{p = l}$ and $(u')^{p = l'}$ are nonzero at separate indices, so the extra component of $A_i^{-1}$ that appears after the $u^{p = l'}$ update is irrelevant to the $u^{p = l}$ update, because $(u^{p = l})^\top (u')^{p = l'} = 0$. Since the $u^{p = l}$ update takes $O(n_l(q - i))$ time, all of them together take $O(\sum_l n_l (q - i))$ time, which equals $O(n(q-i))$ time. Calculating an element of $c'$ only involves computing the denominator in Equation \ref{eqn:rankoneupdate}, using a matrix-vector product already computed. So we can write $C'_{:, i:q}$ and $U'_{:, i:q}$ by adding a preceding column to $C'_{:, i+1:q}$ and $U'_{:, i+1:q}$, using the same partition $p$, and it takes $O(n(q-i))$ time.

Following the induction down to $i=0$, we have $(L(P, C, U) + I)^{-1} = I + L(P, C', U')$, and a total time of $O(nq^2)$.
\end{proof}

Algorithm \ref{alg:inv} also performs approximate inversion, which we prove in Appendix \ref{sec:alginv}. It differs slightly from the algorithm in the proof, but can take full advantage of a GPU speedup. In the setting where all columns of $U$ are identical, observe that in Lines \ref{lin:indexadd2} and \ref{lin:fancyindexing2}, the same computation is repeated for all $k \in [i]$. Indeed, in this setting, this block of code can be modified to run in $O(n)$ time rather than $O(ni)$, making the whole algorithm run in $O(nq)$ time, as shown in Proposition \ref{prop:nq} in Appendix \ref{sec:alginv}.

\begin{restatable}{algorithm}{alginv}
\caption{Inverse and determinant of $I +$ SROS Linear Operator. Lines \ref{lin:indexadd1} through \ref{lin:fancyindexing2} can all be easily vectorized on a GPU. Lines \ref{lin:indexadd1} and \ref{lin:indexadd2} require \texttt{torch.Tensor.index\_add\_} or equivalent, and lines \ref{lin:fancyindexing1} and \ref{lin:fancyindexing2} require non-slice indexing, which are not quite as fast as some GPU operations.}
\label{alg:inv}
\begin{algorithmic}[1]
    \Require $P \in [r]^{n \times q}$, $C, U \in \R^{n \times q}$
    \Ensure $I + L(P, C', U') = (I + L(P, C, U))^{-1}$; $x = \log |I + L(P, C, U)|$
    \State $x, C', U' \gets 0, \mathbf{0} \in \R^{n \times q}, U$
    \For{$i \in (q, q-1, ..., 1)$} \Comment{$O(nq^2)$ time}
        \State $p, c, u, u' \gets P_{:, i}, C_{:, i}, U_{:, i}, U'_{:, i}$
        \State $z \gets \mathbf{0} \in \R^r$ \Comment{$z_l$ will store $c^{(l)} ((u')^{p = l})^\top u^{p = l}$, where $c^{(l)} = c_k$ if $p_k = l$}
        \For{$j \in [n]$}
            $z_{p_j} \gets z_{p_j} + c_j u'_j u_j$ \Comment{$O(n)$ time} \label{lin:indexadd1}
        \EndFor
        \For{$j \in [n]$}
            $C'_{ji} \gets -c_j/(1 + z_{p_j})$ \Comment{$O(n)$ time} \label{lin:fancyindexing1}
        \EndFor
        \For{$l \in [r]$}
            $x \gets x + \log(1 + z_l)$ \Comment{$O(n)$ time because $r \leq n$} \label{lin:simpleop}
        \EndFor
        \If{$i > 0$}
            \State $y \gets \mathbf{0} \in \R^{n \times i}$ \Comment{$O(ni)$ time}
            \For{$j, k \in [n] \times [i-1]$}
                $y_{p_j k} \gets y_{p_j k} + u'_j U_{jk}$ \Comment{$O(ni)$ time} \label{lin:indexadd2}
            \EndFor
            \For{$j, k \in [n] \times [i-1]$}
                $U'_{jk} \gets U'_{jk} + C'_{ji} u'_j y_{p_j k}$ \Comment{$O(ni)$ time} \label{lin:fancyindexing2}
            \EndFor
        \EndIf
    \EndFor
    \Return $C', U', x$
\end{algorithmic}
\end{restatable}

A Hierarchical matrix is a matrix which is either represented as a low-rank matrix or as a $2 \times 2$ block matrix of Hierarchical matrices \citep{bebendorf2008hierarchical}. In our SROS format, many of the sparse rank one matrices overlap, whereas in a Hierarchical matrix, the low-rank matrices do not overlap, and converting an SROS matrix into a Hierarchical matrix would typically be inefficient. Hierarchical matrices admit approximate inversion in $O(n a^2 \log^2 n)$ time, where $a$ is the maximum rank of the component submatrices \citep{hackbusch2004hierarchical}. However, this is not an approximation in a technical sense, as there is no error bound. At many successive steps in the algorithm, a rank $2a$ matrix is approximated by a rank $a$ matrix \citep{hackbusch1999sparse}; to our knowledge there is no analysis of how resulting errors might cascade. After converting an SROS matrix to hierarchical form, this rough inversion would take $O(n q^2 \log^2 n)$ time.

\section{Binary tree Gaussian process} \label{sec:gp}
We now show that our kernel matrix $K_{XX}$ can be written in SROS form, with $P$ containing successively finer partitions. Thus, $K_{XX}$ can be approximately inverted quickly, for use in Equations \ref{eqn:predmean} and \ref{eqn:predvar}. Next, we'll show that we can efficiently optimize the log likelihood of the training data by tuning the weight vector $w$ along with the bit order. The log likelihood can be calculated in $O(nq \log n)$ time and then the gradient w.r.t. $w$ in $O(nq^2)$ time.
% (Approximating the gradient with finite differences also takes $O(nq^2)$ time, but is empirically slower.)

% Recall Equation \ref{eqn:sros} in
Recall from
the proof of Proposition $\ref{prop:psd}$: $K_{XX} = \sum_{i = 1}^q \sum_{s \in \B^i} w_i X_{[s]} X_{[s]}^\top$, where $X_{[s]} \in \R^n$ with $(X_{[s]})_j = \left[\!\left[X_j^{\leq |s|} = s \right]\!\right]$. So we will set $P_{:, i}$, $C_{:, i}$, and $U_{:, i}$, so that $L(P_{:, i}, C_{:, i}, U_{:, i}) = \sum_{s \in \B^i} w_i X_{[s]} X_{[s]}^\top$. Let $P_{:, i}$ partition the set of points $X$ so that points are in the same partition if the first $i$ bits match. Now, requiring the first $i+1$ bits to match is a stricter criterion than requiring the first $i$ bits to match, so the $P_{:, i}$ grow successively finer. For any piece of the partition where the first $i$ bits of the constituent points equals the bitstring $s$, the corresponding sparse rank one component of $K_{XX}$ is $w_i X_{[s]} X_{[s]}^\top$. So let $U_{:, i} = \mathbf{1}^n$, and let $C_{:, i} = w_i \mathbf{1}^n$.

\begin{proposition}[SROS Form Kernel] \label{prop:kxx}
    $K_{XX} = L(P, C, U)$, as defined above.
\end{proposition}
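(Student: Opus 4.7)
The plan is to verify the identity term by term in $i$, showing that the $i$th column of the SROS representation reproduces $\sum_{s \in \B^i} w_i X_{[s]} X_{[s]}^\top$ from the proof of Proposition~\ref{prop:psd}, and then summing over $i$ gives the desired $K_{XX}$.

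First I would expand the right-hand side using the definition of $L$ on multi-column inputs and then Definition~\ref{def:sros} for each column:
\[
L(P, C, U) = \sum_{i=1}^q L(P_{:,i}, P_{:,i}, U_{:,i}, C_{:,i} \odot U_{:,i}) = \sum_{i=1}^q \sum_{l=1}^r U_{:,i}^{P_{:,i} = l}\, \bigl((C_{:,i} \odot U_{:,i})^{P_{:,i}=l}\bigr)^{\!\top}.
\]
Substituting $U_{:,i} = \mathbf{1}^n$ and $C_{:,i} = w_i \mathbf{1}^n$, each summand reduces to $w_i\, e_{i,l} e_{i,l}^\top$, where $e_{i,l} \in \{0,1\}^n$ is the indicator vector of the $l$th partition class under $P_{:,i}$.

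The next step is to match partition classes of $P_{:,i}$ with binary prefixes. By construction, $P_{:,i}$ assigns two training indices $j, j'$ to the same class iff $X_j^{\leq i} = X_{j'}^{\leq i}$. Hence the nonempty classes of $P_{:,i}$ are in bijection with the set $S_i := \{ X_j^{\leq i} : j \in [n] \} \subseteq \B^i$, and if class $l$ corresponds to the prefix $s \in S_i$, then $e_{i,l} = X_{[s]}$ by the definition of $X_{[s]}$. For any $s \in \B^i \setminus S_i$, we have $X_{[s]} = \mathbf{0}$, so those terms contribute nothing and we may freely extend the sum over $l$ to a sum over all $s \in \B^i$:
\[
L(P, C, U) = \sum_{i=1}^q \sum_{s \in S_i} w_i X_{[s]} X_{[s]}^\top = \sum_{i=1}^q \sum_{s \in \B^i} w_i X_{[s]} X_{[s]}^\top.
\]
Invoking the decomposition of $K_{XX}$ from the proof of Proposition~\ref{prop:psd} finishes the argument.

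There isn't really a hard step here; the whole thing is bookkeeping. The only mild subtlety is making sure the arbitrary labeling of partition classes by $[r]$ is reconciled with the natural labeling by bitstrings $s \in \B^i$, and handling the (harmless) bitstrings that do not occur as prefixes of any $X_j$. Both are resolved by the observation that the SROS sum is indexed by nonempty classes while the Proposition~\ref{prop:psd} sum is indexed by all of $\B^i$, with the missing terms contributing zero.
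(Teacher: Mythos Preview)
Your proposal is correct and is exactly the unpacking of definitions that the paper gestures at with ``This follows immediately from the definitions.'' You have simply made explicit the bijection between partition classes of $P_{:,i}$ and occurring $i$-bit prefixes, and noted that non-occurring prefixes contribute zero terms; there is nothing more to the argument.
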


This follows immediately from the definitions. To compute these partitions $P_{:, i}$, we sort $X$, which is a set of bit strings. And then we can easily compute which points have the same first $i$ bits. This all takes $O(n q \log n)$ time. Now note that $U_{:, i} = U_{:, i'}$ for all $i, i'$, so $(K_{XX} + \lambda I)^{-1}$ and $|K_{XX} + \lambda I|$ can be computed in $O(nq)$ time, rather than $O(nq^2)$.

The training negative log likelihood of a GP is that of the corresponding multivariate Gaussian on the training data. So:
% \begin{equation}
%     \nll(w) = \left. \left(y^\top (K_{XX}(w) + \lambda I)^{-1} y + \log |K_{XX}(w) + \lambda I| + n \log(2 \pi)\right) \middle/ 2 \right..
% \end{equation}
$\nll(w) = \frac{1}{2} \left(y^\top (K_{XX}(w) + \lambda I)^{-1} y + \log |K_{XX}(w) + \lambda I| + n \log(2 \pi)\right)$.
This can be computed in $O(nq)$ time, since matrix-vector multiplication takes $O(nq)$ time for a matrix in SROS form. So if the bit order is unchanged, an optimization step can be done in $O(nq)$ time, and if the data needs to be resorted, then in $O(nq\log n)$ time. On the largest dataset we tested (House Electric), with $n \approx$ 1.3 million and $q=88$, sorting the data and computing $P$ takes about 0.96 seconds on a GPU, and then calculating the negative log likelihood takes about another 1.08 seconds. We show in Appendix \ref{sec:grad} how to compute $\nabla_w \nll$ in $O(nq^2)$ time.

To optimize the bit order and weight vector at the same time, we represent both with a single parameter vector $\theta \in \R_+^q$, with $||\theta||_\infty = 1$. To get the bit order from $\theta$, we start with a default bit order and permute the bit order according to a permutation that would sort $\theta$ in descending order. To get the weight vector, we sort $\theta$ in descending order, add a $0$ at the end, and compute the differences between adjacent elements. When there are ties in the elements of $\theta$, the choice of bit order does not affect the negative log likelihood (or the kernel at all) because the relevant associated weight is $0$. The negative log likelihood is continuous with respect to $\theta$, and when all values of $\theta$ are unique, it is differentiable with respect to $\theta$. Letting $\theta = e^{\phi} / ||e^{\phi}||_{\infty}$, we minimize loss w.r.t. $\phi$ using BFGS \citep{fletcher2013practical}.

To calculate the predictive mean at a list of predictive locations $X'$, we first multiply $y$ by $(K_{XX} + \lambda I)^{-1}$, and then we multiply that vector by $K_{XX'}$. We obtain both $K_{XX}$ and $K_{XX'}$ in SROS form as follows. Let $\tilde{X} = X \circ X'$ be the concatenation of the two tuples, now an $(n+m)$-tuple. Writing $K_{\tilde{X}\tilde{X}} = L(\tilde{P}, \tilde{C}, \tilde{U})$, the arrays on the r.h.s. can be computed in $O((n+m)q \log(n+m))$ time. Then, with $P$, $C$, and $U$ being the first $n$ rows of $\tilde{P}$, $\tilde{C}$, $\tilde{U}$, $K_{XX} = L(P, C, U)$. And letting $P''$ and $U''$ be the last $m$ rows, $K_{XX'} = L(P, P'', C \odot U, U'')$. Thus, the predictive mean $\mu_x$ from Equation \ref{eqn:predmean} can be computed at $m$ locations in $O((n+m)q \log(n+m))$ time.

The predictive covariance matrix, which extends the predictive variance from Equation \ref{eqn:predvar}, is calculated
% \begin{equation}
%     \Sigma_{X'} = K_{X'X'} + \lambda I_m - K_{X'X}(K_{XX} + \lambda I_{n})^{-1} K_{XX'} = (K_{\tilde{X}\tilde{X}} + \lambda I_{m + n}) / K_{XX}
% \end{equation}
$\Sigma_{X'} = K_{X'X'} + \lambda I_m - K_{X'X}(K_{XX} + \lambda I_{n})^{-1} K_{XX'} = (K_{\tilde{X}\tilde{X}} + \lambda I_{m + n}) / K_{XX}$,
where $/$ denotes the Schur complement. From a property of block matrix inversion, the last $m$ columns of the last $m$ rows of $(K_{\tilde{X}\tilde{X}} + \lambda I)^{-1}$ equals $((K_{\tilde{X}\tilde{X}} + \lambda I_{m + n}) / K_{XX})^{-1}$. So we get the predictive precision matrix in $O((n+m)q\log(n+m))$ time by inverting $K_{\tilde{X}\tilde{X}} + \lambda I$ and taking the bottom right $m \times m$ block. Then, we get the predictive covariance matrix by inverting that. This takes $O(mq^2)$ time, since it does not have the property of all the columns of $U$ being equal. If we only want the diagonal elements of an SROS matrix (the independent predictive variances in this case), we can simply sum the rows of $C \odot U \odot U$ in $O(mq)$ time. Thus, in total, computing the independent predictive variances requires $O((n+m)q\log(n+m) + mq^2)$ time. See Algorithm \ref{alg:gp}.

\begin{algorithm}[t]
\caption{GP Regression with a binary tree kernel.}
\label{alg:gp}
\begin{algorithmic}[1]
    \Require $X \in \B^{n \times q}$, $y \in \R^n$, $X' \in \B^{m \times q}$, $w \in \R^q$, $\lambda \in \R^{+}$
    \Ensure $\mu_{X'}$ and $\sigma^2_{X'}$ are the predictive means and variances at $X'$, and $\textrm{nll}$ the training negative log likelihood.
    \State $\tilde{X} \gets X \circ X'$
    \State $\tilde{X}^{\uparrow}, \textrm{perm} \gets \textrm{Sort}(\tilde{X})$ \Comment{\parbox[t]{.5\linewidth}{The rows of $X$ are sorted lexically from leading bit to trailing bit. $O((n+m)q\log(n+m))$ time.}}
    \For{$j, i \in [n+m] \times [q]$}
        $\tilde{P}^{\uparrow}_{ji} \gets \# \textrm{of unique rows in } X^{\uparrow}_{1:j, 1:i}$
        \State \Comment{$M_{1:j, 1:i}$ is the first $j$ rows and $i$ columns of $M$. $O((n+m)q)$ time.}
    \EndFor
    \State $\tilde{P} \gets \textrm{perm}^{-1}(P^{\uparrow})$ \Comment{This ``unsorts'' the input. $O((n+m)q)$ time.}
    \State $P, P' \gets \tilde{P}_{1:n}, \tilde{P}_{n+1:n+m}$
    \State $U, U', \tilde{U} \gets \mathbf{1}^{n \times q}, \mathbf{1}^{m \times q}, \mathbf{1}^{(n+m) \times q}$
    \State $C, \tilde{C} \gets \mathbf{1}^n w^T, \mathbf{1}^{n+m} w^T$
    \State $C_\lambda^{-1}, U^{-1}, \textrm{logdet}_\lambda \gets \textrm{Invert}(P, \lambda^{-1}C, U)$ \Comment{\parbox[t]{.44\linewidth}{Uses Algorithm \ref{alg:inv}. Speedup to $O(nq)$ time because columns of $\lambda^{-1}U$ are identical.}}
    \State $C^{-1}, \textrm{logdet} \gets \lambda^{-1}C_\lambda^{-1}, \textrm{logdet}_\lambda + n \log(\lambda)$
    \State $z \gets \textrm{LinTransform}(P, P, U^{-1}, C^{-1} \odot U^{-1}, y) + \lambda^{-1}y$ \Comment{\parbox[t]{.34\linewidth}{Uses Algorithm \ref{alg:matvec} to compute the Woodbury vector. $O(nq)$ time.}}
    \State $\mu_{X'} \gets \textrm{LinTransform}(P', P, U', C \odot U, z)$ \Comment{$O((n+m)q)$ time.}
    \State $\textrm{nll} \gets (y^\top z + \textrm{logdet} + n\log(2\pi))/2$
    \State $\tilde{C}^{\textrm{prec}}, \tilde{U}^{\textrm{prec}} \gets \textrm{Invert}(\tilde{P}, \lambda^{-1}\tilde{C}, \lambda^{-1}\tilde{U})$ \Comment{$O((n+m)q)$ time.}
    \State $C^{\textrm{prec}}, U^{\textrm{prec}} \gets \tilde{C}^{\textrm{prec}}_{n+1:n+m}, \tilde{U}^{\textrm{prec}}_{n+1:n+m}$
    \State $C^{\textrm{cov}}, U^{\textrm{cov}} \gets \textrm{Invert}(P', C^{\textrm{prec}}, U^{\textrm{prec}})$ \Comment{\parbox[t]{.4\linewidth}{$O(mq^2)$ time; extra factor of $q$ because columns of $U^{\textrm{prec}}$ are not identical.}}
    \State $\sigma^2_{X'} \gets \lambda(\mathbf{1}^m + \textrm{SumEachRow}(C^{\textrm{cov}} \odot U^{\textrm{cov}} \odot U^{\textrm{cov}}))$ \Comment{$O(mq)$ time.}
    \State \Return $\mu_{X'}$, $\sigma^2_{X'}, \textrm{nll}$
\end{algorithmic}
\end{algorithm}

\section{Related Work} \label{sec:review}

All existing kernels of which we are aware for linear time GP regression on unstructured data involve
inducing points (related to the Nystr\"om approximation \citep{williams2000using}) or inducing frequencies.
% , the former being related to the Nystr\"om approximation \citep{williams2000using}.
For a given set of inducing points $Z$, for some base kernel $k$, the inducing point kernel (in its most basic form) is the following, although subtle variants exist: $k^Z(x, x') = K_{xZ} K_{ZZ}^{-1} K_{Zx'}$ \citep{quinonero2005unifying}.

Sparse Gaussian Process Regression (SGPR) involves selecting $Z$, and then using $k^Z$ (or a variant). Notably, $K^Z_{XX} = K_{XZ}K_{ZZ}^{-1}K_{ZX}$ is low rank, providing computational efficiency. The predictive mean and covariance have compact form, with observational noise $\lambda$: $\mu_x(Z) = K_{xZ} (\lambda K_{ZZ} + k_{ZX}k_{XZ})^{-1} K_{ZX} y$ and $\sigma^2_{xx'}(Z) = K_{xZ} (K_{ZZ} + \lambda^{-1} k_{ZX}k_{XZ})^{-1} K_{Zx'}$.
% \begin{equation}
%   \mu_x(Z) = K_{xZ} (\lambda K_{ZZ} + k_{ZX}k_{XZ})^{-1} K_{ZX} y; \hspace{1mm} \sigma^2_{xx'}(Z) = K_{xZ} (K_{ZZ} + \lambda^{-1} k_{ZX}k_{XZ})^{-1} K_{Zx'}. \label{eqn:inducing}
% \end{equation}
% \begin{align}
%     \mu_x(Z) &= K_{xZ} (\lambda K_{ZZ} + k_{ZX}k_{XZ})^{-1} K_{ZX} y \label{eqn:inducingmean}
%     \\
%     \sigma^2_{xx'}(Z) &= K_{xZ} (K_{ZZ} + \lambda^{-1} k_{ZX}k_{XZ})^{-1} K_{Zx'} \label{eqn:inducingcov}
% \end{align}

\citepos{titsias2009variational} sparse variational kernel is also low rank and uses inducing points. The sparse variational GP (SVGP) is constructed as the solution to a variational inference problem. It depends on inducing points $Z$, data points $X$, and observed function values $y$. We have focused on Gaussian processes with $0$ mean, but the SVGP method uses a nonzero prior mean along with a kernel: $m^{\textrm{SVGP}}(x) = \mu_x(Z)$ and $k^{\textrm{SVGP}}(x, x') = k(x, x') - K_{xZ} K_{ZZ}^{-1} K_{Zx'} + \sigma^2_{xx'}(Z)$. Given the dependence on $X$ and $y$, this is not a true probability distribution over function space.
% \begin{align}
%     m^{\textrm{SVGP}}(x) &= \mu_x(Z)  &\textrm{from Equation \ref{eqn:inducingmean}}
%     \\
%     k^{\textrm{SVGP}}(x, x') &= k(x, x') - K_{xZ} K_{ZZ}^{-1} K_{Zx'} + \sigma^2_{xx'}(Z) &\textrm{from Equation \ref{eqn:inducingcov}}
% \end{align}
The variational problem underlying this kernel also provides guidance in how to select the inducing points $Z$. For further discussion of the kernel underlying the SVGP method, see \citet{wild2021connections}.

An inducing point kernel with $z$ inducing points produces a $z$-dimensional reproducing kernel Hilbert space (RKHS). The dimensionality of the RKHS relates to the expressivity of the kernel. Whereas an inducing point method buys a $z$-dimensional RKHS for the price of $O(z^2n)$ time and $O(zn)$ space, the binary tree kernel produces a $2^q$-dimensional RKHS in $O(qn)$ time and space---an exponential improvement. (Observe that we can find $2^q$ linearly independent functions of the form $k(\cdot, x)$---one for each of the $2^q$ leaves $x$ might belong to.) \citet{wilson2015kernel} develop a method for speeding up inducing point methods significantly, especially in low-dimensional settings.

\citet{lazaro2010sparse} propose an inducing frequencies kernel: given a set of $m$ inducing vectors $s_i$, $k(x, x') = 1/m \sum_{i=1}^m \cos (2 \pi s_i^\top (x - x'))$. \citet{dutordoir2020sparse} propose an inducing frequencies kernel for low dimensional data, in which $k(x, x')$ is a special function of $x^\top x'$.

On one-dimensional data, filtering/smoothing methods perform Bayesian inference over functions in $O(n)$ time \citep{kalman1960new,hartikainen2010kalman}.
A few non-$O(n)$ methods bear mentioning. We are not the first to consider a kernel over points on the leaves of a tree \citep{ma2020,levesque2017} or on the leaves of multiple trees \citep{dai2020random}, but their methods take $O(n^3)$ time. On certain kinds of structured data, Toeplitz solvers achieve $O(n^2)$ time complexity \citep{zhang2005time}. \citepos{cutajar2016preconditioning,wang2019exact}, and others' use of a conjugate gradients solver to replace inversion/factorization has unclear time complexity between $O(n^2)$ and $O(n^3)$, depending on the kernel matrix spectrum.
\footnote{The conjugate gradients method takes $O(n^2 \sqrt{\kappa})$ time, where $\kappa$ is the condition number of the kernel matrix. \citet{poggio2019double} say ``claims about the condition number of a random matrix A should also apply to kernel matrices with random data.'' If they mean a Wishart random matrix (which it should be if, e.g., $k(x, x') = x^\top x'$), that would be the square of the condition number of the corresponding Gaussian random matrix, which grows as $O(n)$ \citep{chen2005condition}. Putting it all together, we get $O(n^3)$ for conjugate gradients. We don't know how quickly preconditioning can reduce the condition number.}

\section{Experiments} \label{sec:results}
In Table \ref{table:nll}, we compare our binary tree kernel and a binary tree ensemble (see Appendix~\ref{sec:experiment} for details on the ensemble) against three baseline methods: exact GP regression using a Mat\'{e}rn kernel, sparse Gaussian process regression (SGPR) \citep{titsias09}, and a stochastic variational Gaussian process (SVGP) \citep{hensman13}. We evaluate our method on the same open-access UCI datasets \citep{Dua2019} as \citet{wang2019exact}, using their same training, validation, and test partitions, and we compare against the baseline results they report.
For the binary tree (BT) kernels, we use $p=\min(8, \lfloor150 / d \rfloor + 1)$, and recall $q=pd$. We set $\lambda = 1/n$. We train the bit order and weights to minimize training NLL. For the binary tree ensemble (BTE), we use 20 kernels. For the Mat\'{e}rn kernel, we use Blackbox Matrix-Matrix multiplication (BBMM) \citep{gardner2018gpytorch}, which uses the conjugate gradients method to calculate $(K_{XX}+\lambda I)^{-1}$. SGPR uses 512 data points and SVGP uses 1,024 inducing points. We report the mean and two standard errors across 3 replications with different dataset splits. For further experimental details, see Appendix \ref{sec:experiment}.
% \sd{Summary: Best NLL on 6/12 datasets. Best RMSE on 5/12 datasets. Wall times (improved ones coming)}
BTE achieves the best NLL on 6/12 datasets, and best RMSE on 5/12 datasets (including some ties). Out of the 4 largest datasets, BT/BTE is fastest on 3. The run times are plotted in Figure \ref{fig:times}. The code is available at \url{https://github.com/mkc1000/btgp} and \url{https://tinyurl.com/btgp-colab}.
% included in the supplementary material, and we will open source our code upon publication.

\begin{table*} %[!b]
\setlength{\tabcolsep}{4pt}
\tiny
\centering
\begin{sc}
\begin{tabular}{cccccccc}
\toprule
Dataset & $n$& $d$& BTE & BT & Mat\'{e}rn (BBMM) & SGPR & SVGP\\
\midrule
PoleTele & 9,600 & 26& $\bm{-0.625} \pm 0.035$&$-0.490 \pm 0.040$ & $-0.180$ ± 0.036 & $-0.094$ ± 0.008& $-0.001$ ± 0.008 \\
Elevators & 10,623 &18&$0.649 \pm 0.032$ &$0.646 \pm 0.023$& 0.619 ± 0.054 &0.580 ± 0.060 &$\bm{0.519}$ ± 0.022\\
Bike & 11,122 & 17&$\bm{-0.708} \pm 0.433$ &$\bm{-0.806} \pm 0.273$ & 0.119 ± 0.044 & 0.291 ± 0.032 & 0.272 ± 0.018 \\
Kin40k & 25,600 & 8&$0.869 \pm 0.004$ &$0.881 \pm 0.008$ & $\bm{-0.258}$ ± 0.084 & 0.087 ± 0.067 & 0.236 ± 0.077 \\
Protein &29,267 &9& $\bm{0.781} \pm 0.023$&$0.845 \pm 0.026$ &  1.018 ± 0.056& 0.970 ± 0.010& 1.035 ± 0.006 \\
KeggDir &31,248 &20& $-1.031 \pm 0.020$&$-1.029 \pm 0.021$ & $-0.199 \pm 0.381$ & $\bm{-1.123} \pm 0.016$ & $-0.940 \pm 0.020$ \\
CTslice &34,240 &385& $\bm{-2.527} \pm 0.147$&$-1.092 \pm 0.147$ & $-0.894 \pm 0.188$ & $-0.073 \pm 0.097$ & $1.422 \pm 0.005$\\
KEGGU& 40,708&27& $-0.667 \pm 0.007$& $-0.667 \pm 0.007$&$-0.419 \pm 0.027$ & $ \bm{-0.984} \pm 0.012 $ & $-0.666 \pm 0.007$\\
3DRoad & 278,319&3& $\bm{-0.251}\pm 0.009$&$\bm{-0.252} \pm 0.006$ &  0.909 ± 0.001 & 0.943 ± 0.002 & 0.697 ± 0.002\\
Song & 329,820&90&$1.330 \pm 0.003$ & $1.331 \pm 0.003$& $\bm{1.206}$ ± 0.024 & 1.213 ± 0.003 &1.417 ± 0.000 \\
Buzz & 373,280&77&$1.198 \pm 0.003$ &$1.198 \pm 0.003$ &0.267 ± 0.028 &$\bm{0.106}$ ± 0.008 &0.224 ± 0.050 \\
HouseElec &1,311,539 &11&$\bm{-2.569} \pm 0.006$ &$-2.492 \pm 0.012$ & $-0.152$ ± 0.001 & --- & $-1.010$ ± 0.039 \\
\midrule
PoleTele & 9,600 & 26& $\bm{0.154} \pm 0.006$& $0.161 \pm 0.004$& $\bm{0.151} \pm 0.012$&0.217 ± 0.002 & 0.215 ± 0.002\\
Elevators & 10,623 &18&$0.478 \pm 0.021$ &$0.476 \pm 0.018$ &$\bm{0.394}$ ± 0.006 &0.437 ± 0.018 & $\bm{0.399}$ ± 0.009 \\
Bike & 11,122 & 17&$\bm{0.118} \pm 0.057$ &$\bm{0.103} \pm 0.029$& 0.220 ± 0.002& 0.362 ± 0.004 &0.303 ± 0.004 \\
Kin40k & 25,600 & 8&$0.580 \pm 0.003$ &$0.587 \pm 0.006$ &$\bm{0.099}$ ± 0.001 &  0.273 ± 0.025 & 0.268 ± 0.022\\
Protein &29,267 &9& $0.608 \pm 0.008$& $0.623 \pm 0.011$&$\bm{0.536}$ ± 0.012 & 0.656 ± 0.010 & 0.668 ± 0.005 \\
KeggDir &31,248 &20& $\bm{0.086} \pm 0.003$&$\bm{0.086} \pm 0.003$ &$\bm{0.086}$ ± 0.005 &  0.104 ± 0.003 &0.096 ± 0.001 \\
CTslice &34,240 &385& $\bm{0.116} \pm 0.009$&$0.132 \pm 0.009$ &  0.262 ± 0.448& 0.218 ± 0.011& 1.003 ± 0.005 \\
KEGGU& 40,708&27& $0.120 \pm 0.001$& $0.121 \pm 0.001$&$\bm{0.118}$ ± 0.000 & 0.130 ± 0.001 & 0.124 ± 0.002 \\
3DRoad & 278,319&3& $0.187 \pm 0.002$&$0.186 \pm 0.001$ & $\bm{0.101}$ ± 0.007&  0.661 ± 0.010 & 0.481 ± 0.002 \\
Song & 329,820&90&$0.914 \pm 0.003$ & $0.916 \pm 0.003$&  $\bm{0.807}$ ± 0.024& $\bm{0.803}$ ± 0.002 & 0.998 ± 0.000 \\
Buzz & 373,280&77&$0.801 \pm 0.002$ & $0.801 \pm 0.002$& $\bm{0.288}$ ± 0.018& $\bm{0.300}$ ± 0.004 & $\bm{0.304}$ ± 0.012 \\
HouseElec &1,311,539 &11& $\bm{0.029} \pm 0.001$& $\bm{0.029} \pm 0.001$& 0.055 ± 0.000& --- &0.084 ± 0.005 \\
\midrule

PoleTele & 9,600 & 26&\multicolumn{2}{c}{$5.16 \pm 0.58$} &$\bm{0.69}$ ± 0.018& 1.16 ± 0.34 &1.15 ± 0.068 \\ 

Elevators & 10,623 &18&\multicolumn{2}{c}{$2.6 \pm 0.19$} &$\bm{0.68}$ ± 0.012 & 1.16 ± 0.38 & 1.27 ± 0.092\\

Bike & 11,122 & 17& \multicolumn{2}{c}{$2.68 \pm 0.15$}&$\bm{0.69}$ ± 0.015 & 1.17 ± 0.38 & 1.28 ± 0.093\\

Kin40k & 25,600 & 8&\multicolumn{2}{c}{$1.44 \pm 0.028$} &$\bm{0.71}$ ± 0.045& 1.62 ± 0.96& 3.26 ± 0.23\\

Protein &29,267 &9&\multicolumn{2}{c}{$2.92 \pm 0.2$}&$\bm{0.8}$ ± 0.17& 2.27 ± 0.9& 3.31 ± 0.27\\

KeggDir &31,248 &20&\multicolumn{2}{c}{$7.14 \pm 0.39$}&$\bm{0.85 }$ ± 0.1 &2.2 ± 1.09 &3.8 ± 0.38\\

CTslice &34,240 &385& \multicolumn{2}{c}{$52.01 \pm 0.92$}&$\bm{3.32}$ ± 5.0& $\bm{2.16}$ ± 0.99 & 3.87 ± 0.34\\

KEGGU& 40,708&27&\multicolumn{2}{c}{$7.46 \pm 0.54$} &6.32$^*$ ± 0.41$^*$ &$\bm{2.22}$ ± 1.05 &4.78 ± 0.4\\

3DRoad & 278,319&3&\multicolumn{2}{c}{$\bm{1.93} \pm 0.12$} &126.37$^*$ ± 20.92$^*$& 12.01 ± 5.51& 34.09 ± 3.19\\

Song & 329,820&90& \multicolumn{2}{c}{$31.87 \pm 3.79$}&33.79$^*$ ± 10.45$^*$&$\bm{7.89}$ ± 3.12 &39.55 ± 3.08\\

Buzz & 373,280&77&\multicolumn{2}{c}{$\bm{20.18} \pm 6.66$} &571.15$^*$ ± 66.34$^*$& $\bm{29.25}$ ± 18.33 &46.35 ± 2.93\\

HouseElec &1,311,539 &11&\multicolumn{2}{c}{$\bm{118.41} \pm 3.93$} &575.64$^*$ ± 6.94$^*$& --- &$367.71$ ± 4.7\\
\bottomrule

\end{tabular}
\end{sc}
\caption{NLL (top), RMSE (middle), and run time in minutes (bottom) on regression datasets, using a single GPU (Tesla V100-SXM2-16GB for BT and BTE and Tesla V100-SXM2-32GB for the other methods). The asterisk indicates an estimate of the time from the reported training time on 8 GPUS, assuming linear speedup in number of GPUs and independent noise in training times per GPU.
All columns except BT and BTE come from \citet{wang2019exact}.
}
\label{table:nll}
\end{table*}

\begin{wrapfigure}{R}{0.6\linewidth}
    \centering
    % \vspace*{-12mm}
    % \vspace*{-3mm}
    \includegraphics[width=0.9\linewidth]{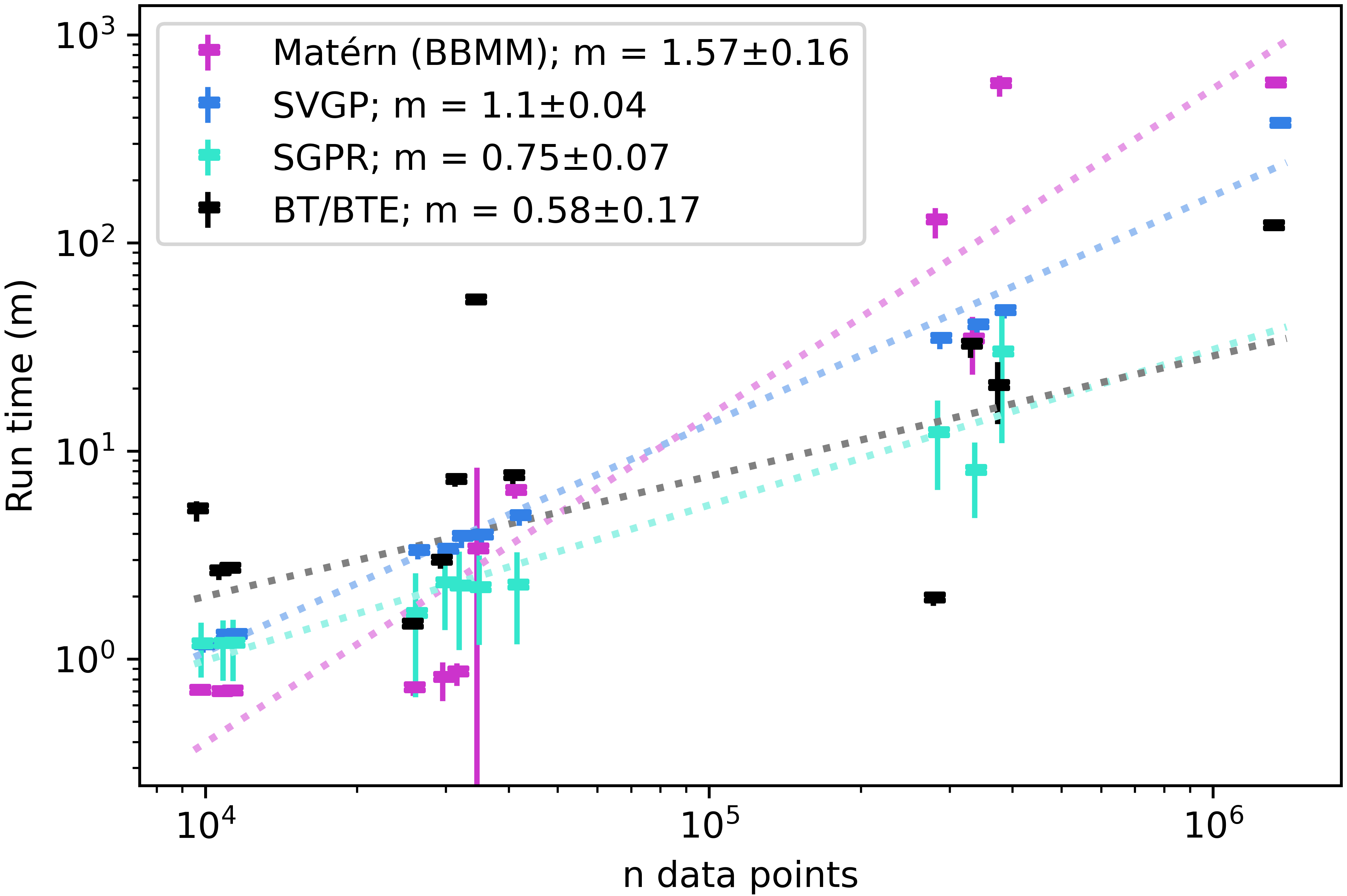}
    \caption{Run times given dataset size. For BT, the trendline is calculated controlling for $\log(q)$ with affine regression, and then setting $q=150$. The slope w.r.t. $\log(q)$ is $2.82 \pm 1.06$. Theoretically, all slopes are too low except for that of SVGP, presumably because of overhead in the small-data regime.}
    \label{fig:times}
    \vspace*{-3mm}
    \vspace*{-13mm}
\end{wrapfigure}

BT performs noticeably worse than BTE for test NLL on CTSlice due to over-fitting. There are enough degrees of freedom when optimizing the bit order ($d=385$) that BT kernel can over-fit to the training data. The ensemble over multiple bit orders is much more robust.

\section{Discussion}

We have proven that the binary tree kernel GP is scalable. Our empirical results suggest that it often outpredicts not just other scalable methods, but even the popular Mat\'ern GP. 
If the results in this paper replicate in other domains, it could obviate wide usage of classic GP kernels like the Mat\'ern kernel, as well as inducing point kernels. Sometimes, our kernel fails to capture patterns in the data; some functions' values simply do not covary this way. But other kernels we tested seemed to fail like that even more.

% The SROS representation of linear operators, which we developed for this kernel, is conceptually general, and could potentially be useful in other domains where linear algebra is applied. Low-rank approximations are widely used to speed up linear transformations, and SROS representations might be an alternative in many settings.

Our contributions to linear algebra and kernel design may 
significantly increase the size of data sets on which GPs can do state-of-the-art modelling.
% increase GPs' viability when modelling very large datasets.
% may allow GPs to outcompete other machine learning methods on much larger datasets than they do currently.
% Our contributions to linear algebra and kernel design may significantly extend the viability of high-quality Gaussian process regression to larger datasets. % Too similar to a sentence in the intro.
% Our contributions to linear algebra and kernel design may allow Gaussian process regression to do both tractable and high-quality in settings with much more data.

\bibliography{gp}

\clearpage
\appendix

\section{Correctness of Algorithm \ref{alg:inv}} \label{sec:alginv}

In this section, we show that Algorithm \ref{alg:inv} performs approximate inversion. And then we show how to modify the algorithm in the setting were all columns of $U$ are identical, for a factor of $q$ speedup.

We begin by writing the exact form of $(u')^{p=l}$ from the proof of Theorem \ref{thm:inv}, and the corresponding ${c'}^{(l)}$. Recall the Sherman--Morrison Formula:
\begin{equation*}
    (A + cuu^\top)^{-1} = A^{-1} - \frac{A^{-1} uu^\top A^{-1}}{c^{-1} + u^\top A^{-1} u}
\end{equation*}
At the time that we update $A$ with the rank one matrix $c^{(l)} u^{p=l} {u^{p=l}}^\top$, what does $A$ equal? Let $c$ and $u$ originate from the $i$\textsuperscript{th} column of $C$ and $U$. So our update is $C_{:, i}^{(l)} U_{:, i}^{P_{:, i}=l} {U_{:, i}^{P_{:, i}=l}}^\top$.

Then $A \gets A_{i+1} = I + \sum_{k = i+1}^q \sum_{\ell \in [r]} C_{:, k}^{(\ell)} U_{:, k}^{P_{:, k}=\ell} {U_{:, k}^{P_{:, k}=\ell}}^\top$, recalling $r$ is the largest integer in $P$. And likewise, all the relevant entries of $C'$ and $U'$ have been calculated for $A^{-1}$. So we have $A^{-1}_{i+1} = I + \sum_{k = i+1}^q \sum_{\ell \in [r]} {C'}_{:, k}^{(\ell)} {U'}_{:, k}^{P_{:, k}=\ell} {{U'}_{:, k}^{P_{:, k}=\ell}}^\top$.

The main computation we need to do is $A^{-1}u$. We'll say that $k_\ell \sqsubseteq i_l$ if set $l$ from partition $P_{:, i}$ is a superset of set $\ell$ from partition $P_{:, k}$. That is, the rows where $P_{:, k}$ takes the value $\ell$ are a subset of the rows where $P_{:, i}$ takes the value $l$. The key simplification we use is that if $k_\ell \not \sqsubseteq i_l$, then ${U_{:, i}^{P_{:, i}=l}}^\top {U'}_{:, k}^{P_{:, k}=\ell} = 0$. The rows at which those two vectors have nonzero elements are disjoint. This logic is explained in the proof of Theorem \ref{thm:inv} without all the notation.

So we let
\begin{align}
    {U'}_{:, i}^{P_{:, i} = l} = A^{-1}_{i+1} {U_{:, i}^{P_{:, i}=l}} &= \left(I + \sum_{k = i+1}^q \sum_{\ell \in [r]} {C'}_{:, k}^{(\ell)} {U'}_{:, k}^{P_{:, k}=\ell} {{U'}_{:, k}^{P_{:, k}=\ell}}^\top \right) {U_{:, i}^{P_{:, i}=l}}
    \\
    &= \left(I + \sum_{k = i+1}^q \sum_{\ell : k_\ell \sqsubseteq i_l} {C'}_{:, k}^{(\ell)} {U'}_{:, k}^{P_{:, k}=\ell} {{U'}_{:, k}^{P_{:, k}=\ell}}^\top \right) {U_{:, i}^{P_{:, i}=l}}
    \\
    &= {U_{:, i}^{P_{:, i}=l}} + \sum_{k = i+1}^q \sum_{\ell : k_\ell \sqsubseteq i_l} {C'}_{:, k}^{(\ell)} {U'}_{:, k}^{P_{:, k}=\ell} {{U'}_{:, k}^{P_{:, k}=\ell}}^\top {U_{:, i}^{P_{:, k}=\ell}} \label{eqn:uprime}
\end{align}
Equation \ref{eqn:uprime} follows because the terms in the dot product ${{U'}_{:, k}^{P_{:, k}=\ell}}^\top {U_{:, i}^{P_{:, i}=l}}$ are only nonzero when $P_{:, k}=\ell$ and $P_{:, i}=l$, but the latter is implied by $k_{\ell} \sqsubseteq i_l$, so this is equivalent to the elements where $P_{:, k}=\ell$. So ${{U'}_{:, k}^{P_{:, k}=\ell}}^\top {U_{:, i}^{P_{:, i}=l}} = {{U'}_{:, k}^{P_{:, k}=\ell}}^\top {U_{:, i}^{P_{:, k}=\ell}}$, which gives us Equation \ref{eqn:uprime}. Then, we let 
\begin{equation} \label{eqn:cprime}
    {C'}_{:, i}^{(l)} = \frac{- 1}{1/{C}_{:, i}^{(l)} + {{U'}_{:, i}^{P_{:, i} = l}}^\top {U}_{:, i}^{P_{:, i} = l}}
\end{equation}

\begin{proposition}[Correctness of Algorithm \ref{alg:inv}]
    In Algorithm \ref{alg:inv}, $U'$ and $C'$ take the values defined in Equations \ref{eqn:uprime} and \ref{eqn:cprime}.
\end{proposition}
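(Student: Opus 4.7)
The plan is to prove the proposition by downward induction on $i$, showing that at the start of iteration $i$ in the outer loop the column $U'_{:, i}$ equals the value prescribed by Equation \ref{eqn:uprime}, and that the value subsequently written into $C'_{:, i}$ matches Equation \ref{eqn:cprime}. The base case $i = q$ is immediate: no earlier iteration has written to column $q$ of $U'$, so $U'_{:, q} = U_{:, q}$, which agrees with Equation \ref{eqn:uprime} because the inner sum over $k = i+1, \ldots, q$ is empty.

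For the inductive step, suppose the claim holds for all $i' > i$. I would first examine every write to $U'_{:, i}$. These come solely from iterations $i' > i$ through the update on Line \ref{lin:fancyindexing2} applied with $k = i$: iteration $i'$ increments $U'_{ji}$ by $C'_{ji'} \, U'_{ji'} \, y_{p_{j}, i}$, where $y$ was assembled on Line \ref{lin:indexadd2} from $U'_{:, i'}$ and $U_{:, i}$. Partition-class by partition-class, I would verify that the sum of these increments reconstructs $\sum_{k = i+1}^q \sum_{\ell : k_\ell \sqsubseteq i_l} {C'}_{:, k}^{(\ell)} {U'}_{:, k}^{P_{:, k}=\ell} {{U'}_{:, k}^{P_{:, k}=\ell}}^\top U_{:, i}^{P_{:, k}=\ell}$ on each class $l$. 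The key ingredient here is the same subset/disjointness dichotomy invoked in the proof of Theorem \ref{thm:inv}: because $P_{:, k}$ is finer than or equal to $P_{:, i}$ for $k > i$, each class $k_\ell$ is either contained in some $i_l$ or disjoint from it, so ${{U'}_{:, k}^{P_{:, k}=\ell}}^\top U_{:, i}^{P_{:, i}=l} = {{U'}_{:, k}^{P_{:, k}=\ell}}^\top U_{:, i}^{P_{:, k}=\ell}$. This equality is precisely what the partitioned scatter on Line \ref{lin:indexadd2}, indexed by $p'_j = P_{j,i'}$, computes.

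Once $U'_{:, i}$ is identified with Equation \ref{eqn:uprime}, I would verify the $C'$ update. On Line \ref{lin:indexadd1} the accumulator satisfies $z_l = \sum_{j : p_j = l} c_j u'_j u_j = C_{:, i}^{(l)} \, {U'}_{:, i}^{P_{:, i}=l\,\top} U_{:, i}^{P_{:, i}=l}$, so Line \ref{lin:fancyindexing1} assigns $C'_{ji} = -c_j/(1 + z_{p_j})$; for $p_j = l$ this equals $-C_{:, i}^{(l)}/\bigl(1 + C_{:, i}^{(l)} {U'}_{:, i}^{P_{:, i}=l\,\top} U_{:, i}^{P_{:, i}=l}\bigr)$, which after dividing numerator and denominator by $C_{:, i}^{(l)}$ is exactly Equation \ref{eqn:cprime}. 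With both equations verified, substituting the resulting $U'$ and $C'$ into the Sherman--Morrison recursion used in the proof of Theorem \ref{thm:inv} gives $(I + L(P, C, U))^{-1} = I + L(P, C', U')$, proving correctness; the log-determinant accumulated on Line \ref{lin:simpleop} is a bookkeeping byproduct of the matrix determinant lemma applied to the same rank-one updates.

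The main obstacle I anticipate is index bookkeeping: confirming that the in-place updates to $U'_{:, i}$ from the $q - i$ earlier outer iterations compose in the correct order, and that the subset relation $k_\ell \sqsubseteq i_l$ is handled uniformly across all combinations of $(k, \ell, i, l)$. A subsidiary subtlety worth spelling out is that, since for $k > i$ the class $k_\ell$ lies inside a unique $i_l$, the scatter-by-$p'_j$ on Line \ref{lin:indexadd2} aggregates exactly the right per-class contributions without any cross-class leakage; this is the point at which the hypothesis that the columns of $P$ are totally ordered by refinement is genuinely used.
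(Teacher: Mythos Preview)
Your proposal is correct and follows essentially the same approach as the paper's proof: downward induction on $i$, with the base case $U'_{:,q}=U_{:,q}$, verification that the accumulated writes to $U'_{:,i}$ from iterations $i'>i$ reproduce the sum in Equation~\ref{eqn:uprime} via the subset/disjointness dichotomy, and verification that Lines~\ref{lin:indexadd1}--\ref{lin:fancyindexing1} implement Equation~\ref{eqn:cprime}. Your introduction of a separate symbol $i'$ for the earlier outer-loop index is in fact cleaner than the paper's own handling of the $i\leftrightarrow k$ swap, which the authors explicitly apologize for.
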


% We restate Algorithm \ref{alg:inv} here for convenience.
% \alginv*

\begin{proof}
We'll assume that for $k > i$, ${C'}_{:, k}$ and ${U'}_{:, k}$ have the right values, and we'll show that ${C'}_{:, i}$ and ${U'}_{:, i}$ get the right values. Starting with $U'_{:, q}$, the $\sum_{k=q+1}^q$ in Equation \ref{eqn:uprime} is empty, so $U'_{:, q}$ = $U_{:, q}$. In Algorithm \ref{alg:inv}, $U'$ is initialized to $U$, and the $q$\textsuperscript{th} column is never updated.

Now we see that ${C'}_{:, i}$ is correct assuming ${U'}_{:, i}$ is. In Line \ref{lin:indexadd1}, we ensure $z_l = {C}_{:, i}^{(l)} {{U'}_{:, i}^{P_{:, i}=l}}^\top {U}_{:, i}^{P_{:, i}=l}$. A dot product is the sum of elementwise multiplications, and one can inspect that each such multiplication gets added to the right slot in $z$. Then, Line \ref{lin:fancyindexing1} implements Equation \ref{eqn:cprime}, with numerator and denominator multiplied by $C^{(l)}_{:, i}$. (It stores the same value in multiple locations).

Now we turn to ${U'}_{:, i}$. ${U'}_{:, i}$ is updated in every preceding loop. It starts out initialized to $U_{:, i}$, which accounts for the first term in Equation \ref{eqn:uprime}. In the sum from $k = i+1$ to $q$, each term is accounted for in a separate loop of the algorithm. We check that each term gets added at some point. So consider the term $\sum_{\ell : k_\ell \sqsubseteq i_l} {C'}_{:, k}^{(\ell)} {U'}_{:, k}^{P_{:, k}=\ell} {{U'}_{:, k}^{P_{:, k}=\ell}}^\top  {U_{:, i}^{P_{:, i}=l}}$.

This term gets added to $U_{:, i}$ when $i$ from Algorithm \ref{alg:inv} equals $k$ from Equation \ref{eqn:uprime}, and when $k$ from Algorithm \ref{alg:inv} equals $i$ from Equation \ref{eqn:uprime}. We are very sorry about this correspondence, but it would have to happen either here or above in the discussion of $C'$. Observe that in Line \ref{lin:indexadd2},  $y_{p_j k}$ takes the value ${(U')_{:, i}^{P_{:, i} = p_j}}^\top {U_{:, k}^{P_{:, i}=p_j}}$. Then, in Line \ref{lin:fancyindexing2}, $y_{p_j k}$ gets multiplied by $(U')_{:, i}^{P_{:, i} = p_j}$ and ${C'}_{:, i}^{(p_j)}$, and added to $U'_{:, k}$. Swapping the $i$'s and $k$'s, and letting $p_j$ from Algorithm \ref{alg:inv} equal $\ell$ from Equation \ref{eqn:uprime}, Lines \ref{lin:indexadd2} and \ref{lin:fancyindexing2} add to $U'_{:, k}$ the terms in Equation \ref{eqn:uprime}.

Thus, doing induction from $i = q$ down to 1, Algorithm \ref{alg:inv} assigns $C'_{:, i}$ and $U'_{:, i}$ the correct values.
\end{proof}

Now, we modify Algorithm \ref{alg:inv} for the setting where all the columns $U_{:, i}$ are the same, allowing a speedup of $O(q)$. 

\begin{algorithm}
\caption{Inverse and determinant of $I +$ SROS Linear Operator, in which all columns of $U$ are the same.}
\label{alg:invnq}
\begin{algorithmic}[1]
    \Require $P \in [r]^{n \times q}$, $C \in \R^{n \times q}, u \in \R^n$
    \Ensure $I + L(P, C', U') = (I + L(P, C, u (\mathbf{1}^q)^\top))^{-1}$; $x = \log |I + L(P, C, u (\mathbf{1}^q)^\top)|$
    \State $x, C', U' \gets 0, \mathbf{0} \in \R^{n \times q}, u (\mathbf{1}^q)^\top$
    \For{$i \in (q, q-1, ..., 1)$} \Comment{$O(nq)$ time}
        \State $p, c, u' \gets P_{:, i}, C_{:, i}, U'_{:, i}$
        \State $z \gets \mathbf{0} \in \R^r$ \Comment{\parbox[t]{.4\linewidth}{$z_i$ will store $c^{(l)} ((u')^{p = l})^\top u^{p = l}$, where $c_k = c^{(l)}$ if $p_k = l$}}
        \For{$j \in [n]$}
            $z_{p_j} \gets z_{p_j} + c_j u'_j u_j$ \Comment{$O(n)$ time} %\label{lin:indexadd1}
        \EndFor
        \For{$j \in [n]$}
            $C'_{ji} \gets -c_j/(1 + z_{p_j})$ \Comment{$O(n)$ time} %\label{lin:fancyindexing1}
        \EndFor
        \For{$i \in [r]$}
            $x \gets x + \log(1 + z_i)$ \Comment{$O(n)$ time} %\label{lin:simpleop}
        \EndFor
        \If{i > 0}
            \If{False} \Comment{This block is the slow version. What follows below is equivalent.}
                \State $y \gets \mathbf{0} \in \R^{n \times i}$ \label{lin:10}
                \For{$j, k \in [n] \times [i-1]$}
                    $y_{p_j k} \gets y_{p_j k} + u'_j u_{j}$
                \EndFor \label{lin:11}
                \For{$j, k \in [n] \times [i-1]$}
                    $U'_{jk} \gets U'_{jk} + C'_{ji} u'_j y_{p_j k}$
                \EndFor \label{lin:12}
            \EndIf
            \State $y \gets \mathbf{0} \in \R^{n}$ \Comment{$O(n)$ time} \label{lin:13}
            \State $U'_{:, (i-1)} \gets U'_{:, i}$  \Comment{$O(n)$ time} \label{lin:14}
            \For{$j \in [n]$}
                $y_{p_j} \gets y_{p_j} + u'_j u_{j}$ \Comment{$O(n)$ time} \label{lin:15}
            \EndFor
            \For{$j \in [n]$}
                $U'_{j(i-1)} \gets U'_{j(i-1)} + C'_{ji} u'_j y_{p_j}$ \Comment{$O(n)$ time} \label{lin:16}
            \EndFor
        \EndIf
    \EndFor
    \Return $C', U', x$
\end{algorithmic}
\end{algorithm}

\begin{proposition}[$O(nq)$ inversion] \label{prop:nq}
    Algorithm \ref{alg:invnq} performs approximate inversion in $O(nq)$ time.
\end{proposition}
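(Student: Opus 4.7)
The plan is to prove Proposition \ref{prop:nq} in two steps. First, I would establish an invariant: when all columns of the input $U$ are equal to a single vector $u$, Algorithm \ref{alg:inv} maintains $U'_{:,1} = U'_{:,2} = \cdots = U'_{:,i}$ at the start of outer iteration $i$. Second, I would use this invariant to show that Algorithm \ref{alg:invnq} produces identical output while doing only $O(n)$ work per outer iteration.

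I would prove the invariant by descending induction on $i$ from $q$ to $1$. The base case $i = q$ holds because $U'$ is initialized to $U$, whose columns all equal $u$. For the inductive step, examine Lines \ref{lin:indexadd2}--\ref{lin:fancyindexing2} of Algorithm \ref{alg:inv} inside iteration $i+1$: the update is $U'_{jk} \gets U'_{jk} + C'_{j(i+1)}\,u'_j\,y_{p_j k}$, where $y_{p_j k}$ is accumulated from $u'_j$ and $U_{jk}$. Because $U_{:,k} = u$ does not depend on $k$, the array $y$ is the same across all columns $k \in [i]$, so the additive update $\Delta_{i+1}$ is identical for every such $k$. Together with the inductive hypothesis that $U'_{:,1}, \ldots, U'_{:,i+1}$ were equal entering iteration $i+1$, this preserves equality of columns $1$ through $i$ entering iteration $i$.

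Next I would use the invariant to verify that Algorithm \ref{alg:invnq} agrees with Algorithm \ref{alg:inv} column-by-column. The computations of $z$, $C'_{:,i}$, and the log-determinant accumulator $x$ depend only on $p$, $c$, $u' = U'_{:,i}$, and $u$, so those lines are unchanged between the two algorithms. The only substantive difference is that Algorithm \ref{alg:invnq} replaces the block update of $U'_{:,1:i-1}$ with Line \ref{lin:14}, which copies $U'_{:,i}$ into $U'_{:,i-1}$, followed by Lines \ref{lin:15}--\ref{lin:16}, which apply the shared increment $\Delta_i$ only to column $i-1$. By the invariant, Algorithm \ref{alg:inv} would have had $U'_{:,i-1} = U'_{:,i}$ entering iteration $i$ in any case, so applying $\Delta_i$ on top of this copied value yields the same column $i-1$. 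The remaining columns $U'_{:,1}, \ldots, U'_{:,i-2}$ are never read during iteration $i$; they only enter the computation as $u'$ in their own future outer iterations. Deferring their $\Delta_i$ contributions until those later iterations is therefore harmless: by a simple induction on $k$ descending from $q$, the sequence of copy-then-increment steps reconstructs $U'_{:,k,\mathrm{final}} = u + \Delta_q + \Delta_{q-1} + \cdots + \Delta_{k+1}$, which is exactly the value Algorithm \ref{alg:inv} accumulates for column $k$.

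The runtime claim is then immediate: each outer iteration does $O(n)$ work for $z$, $C'_{:,i}$, and $x$, plus one $O(n)$ copy and two $O(n)$ loops over the length-$n$ vector $y$, giving $O(n)$ per iteration and $O(nq)$ overall across the $q$ outer iterations. The main obstacle is pinning down the deferred-update argument cleanly: one must argue that even though Algorithm \ref{alg:invnq} touches only one prior column per iteration, the chain of copies across iterations correctly reproduces the sum of all the increments that Algorithm \ref{alg:inv} applies eagerly. Once the invariant is in hand, this reduces to the telescoping observation above, and the rest of the verification is routine bookkeeping.
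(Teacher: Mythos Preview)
Your proposal is correct and follows essentially the same approach as the paper: both arguments hinge on the observation that when all columns of $U$ coincide, the per-iteration update in Algorithm~\ref{alg:inv} is column-independent, so adjacent columns of $U'$ differ only by the single increment applied at iteration $k+1$, which is exactly what the copy-then-increment step in Algorithm~\ref{alg:invnq} reproduces. Your explicit invariant $U'_{:,1}=\cdots=U'_{:,i}$ and the telescoping sum are a slightly more formal packaging of the paper's adjacent-column-difference argument, but the content is the same.
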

\begin{proof}
The fact that Algorithm \ref{alg:invnq} runs in $O(nq)$ time is easily verified. Up to Line \ref{lin:12}, Algorithm \ref{alg:invnq} is the same as Algorithm \ref{alg:inv}, except $U$ has been replaced with $u (\mathbf{1}^q)^\top$. So all we have to show is that Lines \ref{lin:13}-\ref{lin:16} produce the same result that Lines \ref{lin:10}-\ref{lin:12} would have.

Observe that at the start of the algorithm, $U'_{:, k}$ and $U'_{:, k+1}$ are initialized to the same value. Observe that Lines \ref{lin:11} and \ref{lin:12} repeat the same computation $i-1$ times. So in Lines \ref{lin:11} and \ref{lin:12}, $U'_{:, k}$ and $U'_{:, k+1}$ are updated by the same amount if $i > k+1$, and they aren't updated at all if $i \leq k$. So the difference between $U'_{:, k}$ and $U'_{:, k+1}$ comes from only the former being updated when $i = k+1$.

Therefore, Line \ref{lin:14} initializes $U'_{:, k}$ to $U'_{:, k+1}$. Then Lines \ref{lin:15} and \ref{lin:16} update $U'_{:, k}$ with the appropriate difference; they copy Lines \ref{lin:11} and \ref{lin:12}, setting $k$ to $i - 1$.
\end{proof}

\section{Gradient of loss with respect to weights} \label{sec:grad}

In this section, we show how to calculate $\nabla_w \nll$ in $O(nq^2)$ time. Recall:
\begin{equation}
    \nll(w) = \frac{1}{2} \left(y^\top (K_{XX}(w) + \lambda I)^{-1} y + \log |K_{XX}(w) + \lambda I| + n \log(2 \pi)\right).
\end{equation}
Differentiating gives:
\begin{equation} \label{eqn:grad}
    \frac{\partial \nll}{\partial w_i} = \frac{1}{2} \left[-y^\top (K_{XX} + \lambda)^{-1} \frac{\partial K_{XX}}{\partial w_i} (K_{XX} + \lambda)^{-1} y + \textrm{Tr}\left(\frac{\partial K_{XX}}{\partial w_i}(K_{XX} + \lambda)^{-1}\right) \right]
\end{equation}

We begin by evaluuating ${\partial K_{XX}}/{\partial w_i}$. Recall from Proposition \ref{prop:kxx} that $K_{XX} = L(P, C, U)$, where $C = \mathbf{1}^n w^\top$, and $U = \mathbf{1}^{n \times q}$. It follows easily from the definition of $L$ that the elements of $L(P, C, U)$ are linear in the elements of $C$. So,
\begin{equation}
    \frac{\partial K_{XX}}{\partial w_i} = L(P_{:, i}, \mathbf{1}^n, \mathbf{1}^n)
\end{equation}
Algorithm \ref{alg:gp} shows how to calculate $(K_{XX} + \lambda)^{-1}$ in $O(nq)$ time, and represent it as $L(P, C^{-1}, U^{-1})$. (Recall $C^{-1}$ and $U^{-1}$ are not true inverses; we just the notation to denote their purpose.) Now we turn to the question of how to calculate $\mathrm{Tr}[L(P_{:, i}, \mathbf{1}^n, \mathbf{1}^n) L(P, C^{-1}, U^{-1})]$. We are considering symmetric matrices, so the trace of the product is the sum of the elements of the elementwise product. We expand and simplify:

\begin{align}
    T &:= \mathrm{Tr}\left[L(P_{:, i}, \mathbf{1}^n, \mathbf{1}^n) L(P, C^{-1}, U^{-1})\right]
    \\
    &= \sum_{j = 1}^q \mathrm{Tr}\left[L(P_{:, i}, \mathbf{1}^n, \mathbf{1}^n) L(P_{:, j}, C^{-1}_{:, j}, U^{-1}_{:, j})\right]
    \\
    &\equal^{(a)} \sum_{j = 1}^q \mathrm{Tr}\left[L(P_{:, \max(i, j)}, \mathbf{1}^n, \mathbf{1}^n) L(P_{:, \max(i, j)}, C^{-1}_{:, j}, U^{-1}_{:, j})\right]
    \\
    &\equal^{(b)} \sum_{j=1}^q \sum_{\textrm{elements}} L(P_{:, \max(i, j)}, C^{-1}_{:, j}, U^{-1}_{:, j})
    \\
    &\equal^{(c)} \sum_{j=1}^q \sum_{\ell = 1}^{\max(P_{:, \max(i, j)})} \sum_{\textrm{elements}} (C^{-1}_{:, j})^{(\ell)} (U^{-1}_{:, j})^{P_{:, \max(i, j)} = \ell} {(U^{-1}_{:, j})^{P_{:, \max(i, j)} = \ell}}^\top
    \\
    &= \sum_{j=1}^q \sum_{\ell = 1}^{\max(P_{:, \max(i, j)})} (C^{-1}_{:, j})^{(\ell)} || (U^{-1}_{:, j})^{P_{:, \max(i, j)} = \ell}||_1^2 \label{eqn:trace}
\end{align}
where $(a)$ follows from the fact that the $(i, j)$\textsuperscript{th} element of $L(p, c, u)$ is zero unless $p_i = p_j$, in which case, it is $c_i u_i u_j$; when multiplying elementwise by $L(p', c', u')$, where $p'$ is a finer partition, $L(p, c, u) \odot L(p', c', u') = L(p', c, u) \odot L(p', c', u')$, because $L(p, c, u)$ and $L(p', c, u)$ only differ on elements where $L(p', c', u')$ is 0 anyway. In the context of $(a)$, $P_{:, \max(i, j)}$ is a finer partition than $P_{:, \min(i, j)}$. $(b)$ follows because we are doing elementwise multiplication between the two matrices; anywhere $L(P_{:, \max(i, j)}, \mathbf{1}^n, \mathbf{1}^n)$ is $0$, $L(P_{:, \max(i, j)}, C^{-1}_{:, j}, U^{-1}_{:, j})$ is already $0$, and elsewhere, multiplying elements by 1 does not effect the matrix. $(c)$ follows from the construction of $L$.

It is straightforward to compute this in $O(nq)$ time. See Algorithm \ref{alg:tracepart}, which runs in $O(n)$ time and can be iterated over the $q$ terms in Equation \ref{eqn:trace}.
\begin{algorithm}
\caption{Calculate $\sum_{\ell = 1}^{\max (p)} c^{(\ell)} ||u^{p=\ell}||_1^2$.}
\label{alg:tracepart}
\begin{algorithmic}[1]
    \Require $p \in [r]^{n}$, $c, u \in \R^n$
    \Ensure $x = \sum_{\ell = 1}^{m} c^{(\ell)} ||u^{p=\ell}||_1^2$
    \State $y \gets \mathbf{0}^m$
    \For{$j \in [n]$}
        $y_{p_j} \gets y_{p_j} + \sqrt{c_j} u_j$
        \Comment{\parbox[t]{.57\linewidth}{$\sqrt{c_j}$ may be imaginary, but it will later be squared. With modifications, we could avoid complex types.}}
    \EndFor
    \State $x \gets 0$
    \For {$j \in [r]$}
        $x \gets x + y_j^2$
    \EndFor
    \Return $x$
\end{algorithmic}
\end{algorithm}

Now we can see that Equation \ref{eqn:grad} can be computed for all $w_i$ in $O(nq^2)$ time. First, $(K_{XX} + \lambda)^{-1}$ can be computed in $O(nq)$ time, in the form $L(P, C^{-1}, U^{-1})$, as shown in Algorithm \ref{alg:gp}. Then, $z = (K_{XX} + \lambda)^{-1} y$ can be computed in $O(nq)$ time, also as shown in Algorithm \ref{alg:gp}. Then, for each $i \in [q]$, we can calculate $-z^\top \frac{\partial K_{XX}}{\partial w_i} z = -z^\top L(P_{:, i}, \mathbf{1}^n, \mathbf{1}^n) z$ in $O(n)$ time. Thus, handling the first term in Equation \ref{eqn:grad} for all $i$ takes a total of $O(nq)$ time. As just shown, the second term can be computed in $O(nq)$ time for each $i$, giving a total run time of $O(nq^2)$.

\section{Experimental Details} \label{sec:experiment}

We initialize 160 random bit orders. For each one, we initialize three weight vectors $w$: uniform, uniform except the last bit is 0.5, and uniform except the last bit is 0.9. Out of these 480 initializations, we draw 20 samples via Boltzman sampling \citep{boltz} on the log likelihood of the training data (after standardizing the values to have zero mean and unit variance). Then, we optimize the weights and bit order with BFGS as described in Section \ref{sec:gp}, using line search with Wofle conditions, with no extra gradient computations during line search. This allows fewer calculations of the gradient relative to the cheaper calculation of the loss. The BT column in Table \ref{table:nll} refers to the performance of the binary tree kernel, using the weights and bit order that gave the lowest training NLL out of these 20 trained models.

BTE produces a Gaussian mixture model at each predictive location, mixing over the predictive Gaussians produced by each of these 20 trained models. The relative weights of each Gaussian in the mixture depends on the training NLL of the model that produced it. We weight the models according to the softmax of the \textit{per-data-point} NLL with a temperature of 0.01.

We follow the same train/test/validation splits as \citet{wang2019exact}, but we never use the validation set, which the methods we compare against need. Thus, we could add the validation data to the training data for the binary tree kernel and call it a fair comparison, but we didn't do this, so as not to confuse the origin of the binary tree kernel's success.

\section{Additional Empirical Evaluation}
\FloatBarrier
\subsection{Sensitivity Analysis on Precision $p$}

In Table~\ref{table:precision_sensitivity}, we evaluate the performance of the BT and BTE kernels on the precision $p$ for $p \in (2,4,8)$. In all problems, we find that  RMSE and NLL decrease monotonically as $p$ increases and wall time increases monotonically. For best predictive performance, $p$ should be set as large as possible subject resource constraints. This validates that our heuristic rule for setting $p$ is a reasonable choice in a variety of settings.

\begin{table*} %[!b]
\setlength{\tabcolsep}{4pt}
\tiny
\centering
\begin{sc}
\begin{tabular}{ccccccccc}
\toprule
Dataset & $n$& $d$& BTE $(p=2)$ & BTE $(p=4)$ & BTE $(p=8)$ &BT  $(p=2)$ & BT $(p=4)$& BT $(p=8)$\\
\midrule
PoleTele & 9,600 & 26& $0.772 \pm 0.022$ & $-0.266 \pm 0.023$ & $-0.664 \pm 0.052$& $0.771 \pm 0.021$ & $-0.198 \pm 0.012$ & $-0.398 \pm 0.159$\\
Elevators & 10,623 &18&$1.095 \pm 0.046$ & $0.761 \pm 0.024$ & $0.654 \pm 0.023$& $1.097 \pm 0.049$ & $0.756 \pm 0.019$ & $0.662 \pm 0.018$\\
Bike & 11,122 & 17&$1.095 \pm 0.046$ & $0.761 \pm 0.024$ & $0.654 \pm 0.023$ & $0.583 \pm 0.013$ & $-0.004 \pm 0.020$ & $-0.800 \pm 0.271$\\
Kin40k & 25,600 & 8&$0.888 \pm 0.004$ & $0.871 \pm 0.009$ & $0.869 \pm 0.004$ & $0.894 \pm 0.005$ & $0.879 \pm 0.012$ & $0.882 \pm 0.006$\\
Protein &29,267 &9& $1.280 \pm 0.007$ & $1.042 \pm 0.012$ & $0.781 \pm 0.022$ & $1.281 \pm 0.007$ & $1.048 \pm 0.013$ & $0.842 \pm 0.032$\\
KeggDir &31,248 &20& $0.917 \pm 0.028$ & $-0.607 \pm 0.019$ & $-1.030 \pm 0.019$& $0.916 \pm 0.029$ & $-0.608 \pm 0.017$ & $-1.028 \pm 0.023$\\
CTslice &34,240 &385&---&---&---&---&---&---\\
KEGGU& 40,708&27& $0.228 \pm 0.057$ & $-0.607 \pm 0.009$ & $-0.668 \pm 0.007$& $0.228 \pm 0.058$ & $-0.606 \pm 0.008$ & $-0.677 \pm 0.015$\\
3DRoad & 278,319&3& $1.292 \pm 0.007$ & $0.973 \pm 0.007$ & $-0.255 \pm 0.004$& $1.295 \pm 0.003$ & $0.981 \pm 0.004$ & $-0.251 \pm 0.005$\\
Song & 329,820&90&$1.328 \pm 0.001$&---&---& $1.317 \pm 0.014$&---&---\\
Buzz & 373,280&77& $1.198 \pm 0.003$&$1.107 \pm 0.009$ &---& $1.198 \pm 0.003$&$1.106 \pm 0.009$&---\\
HouseElec &1,311,539 &11& $0.629 \pm 0.003$&$-0.673 \pm 0.003$&$-2.569 \pm 0.006$ & $0.629 \pm 0.003$&$-0.669 \pm 0.007$&$-2.492 \pm 0.012$\\
\midrule
PoleTele & 9,600 & 26& $0.513 \pm 0.012$ & $0.185 \pm 0.006$ & $0.159 \pm 0.004$&$0.514 \pm 0.012$ & $0.194 \pm 0.003$ & $0.160 \pm 0.009$\\
Elevators & 10,623 &18&$0.725 \pm 0.031$ & $0.525 \pm 0.014$ & $0.481 \pm 0.016$& $0.725 \pm 0.032$ & $0.520 \pm 0.013$ & $0.483 \pm 0.015$ \\
Bike & 11,122 & 17&$0.430 \pm 0.005$ & $0.237 \pm 0.005$ & $0.120 \pm 0.057$& $0.431 \pm 0.006$ & $0.237 \pm 0.005$ & $0.104 \pm 0.029$\\
Kin40k & 25,600 & 8&$0.590 \pm 0.003$ & $0.582 \pm 0.005$ & $0.580 \pm 0.003$& $0.593 \pm 0.004$ & $0.586 \pm 0.006$ & $0.587 \pm 0.005$\\
Protein &29,267 &9&$0.870 \pm 0.006$ & $0.687 \pm 0.010$ & $0.609 \pm 0.008$ & $0.870 \pm 0.006$ & $0.691 \pm 0.010$ & $0.623 \pm 0.010$\\
KeggDir &31,248 &20&  $0.604 \pm 0.017$ & $0.128 \pm 0.003$ & $0.086 \pm 0.003$ & $0.604 \pm 0.017$ & $0.128 \pm 0.003$ & $0.087 \pm 0.003$\\
CTslice &34,240 &385&---&---&---&---&---&---\\
KEGGU& 40,708&27& $0.302 \pm 0.018$ & $0.128 \pm 0.002$ & $0.120 \pm 0.002$& $0.302 \pm 0.018$ & $0.129 \pm 0.001$ & $0.119 \pm 0.002$\\
3DRoad & 278,319&3& $0.882 \pm 0.005$ & $0.642 \pm 0.004$ & $0.187 \pm 0.000$ & $0.883 \pm 0.003$ & $0.645 \pm 0.002$ & $0.186 \pm 0.001$\\
Song & 329,820&90&$0.914 \pm 0.001$&---&---&$0.904 \pm 0.012$&---&---\\
Buzz & 373,280&77&$0.801 \pm 0.002$&$0.729 \pm 0.007$ &--- &$0.801 \pm 0.002$&$0.730 \pm 0.007$&--- \\
HouseElec &1,311,539 &11& $0.453 \pm 0.001$&$0.121 \pm 0.001$ &$0.029 \pm 0.001$ & $0.453 \pm 0.001$&$0.120 \pm 0.001$& $0.029 \pm 0.001$\\
\midrule
PoleTele & 9,600 & 26&$0.600 \pm 0.000$ & $2.500 \pm 0.200$ & $8.600 \pm 0.600$&$0.600 \pm 0.000$ & $2.500 \pm 0.200$ & $8.600 \pm 0.600$ \\ 

Elevators & 10,623 &18& $0.500 \pm 0.100$ & $1.600 \pm 0.100$ & $3.000 \pm 0.400$& $0.500 \pm 0.100$ & $1.600 \pm 0.100$ & $3.000 \pm 0.400$\\

Bike & 11,122 & 17& $0.400 \pm 0.000$ & $1.700 \pm 0.100$ & $3.100 \pm 0.200$& $0.400 \pm 0.000$ & $1.700 \pm 0.100$ & $3.100 \pm 0.200$\\

Kin40k & 25,600 & 8& $0.300 \pm 0.000$ & $1.200 \pm 0.200$ & $1.500 \pm 0.000$& $0.300 \pm 0.000$ & $1.200 \pm 0.200$ & $1.500 \pm 0.000$\\

Protein &29,267 &9& $0.200 \pm 0.000$ & $0.600 \pm 0.000$ & $2.800 \pm 0.100$& $0.200 \pm 0.000$ & $0.600 \pm 0.000$ & $2.800 \pm 0.100$\\

KeggDir &31,248 &20& $0.400 \pm 0.000$ & $2.300 \pm 0.100$ & $7.800 \pm 0.600$& $0.400 \pm 0.000$ & $2.300 \pm 0.100$ & $7.800 \pm 0.600$\\

CTslice &34,240 &385&---&---&---&---&---&---\\

KEGGU& 40,708&27& $0.800 \pm 0.100$ & $5.900 \pm 0.800$ & $12.400 \pm 1.200$& $0.800 \pm 0.100$ & $5.900 \pm 0.800$ & $12.400 \pm 1.200$\\

3DRoad & 278,319&3& $0.100 \pm 0.000$ & $0.200 \pm 0.000$ & $2.100 \pm 0.100$& $0.100 \pm 0.000$ & $0.200 \pm 0.000$ & $2.100 \pm 0.100$\\

Song & 329,820&90&$11.6 \pm 0.8$&---&---&$11.6 \pm 0.8$&---&---\\

Buzz & 373,280&77& $20.200 \pm 6.600$&$54.900 \pm 13.600$&---& $20.200 \pm 6.600$&$54.900 \pm 13.600$&---\\

HouseElec &1,311,539 &11& $3.300 \pm 0.200$&$37.800 \pm 2.300$& $118.41 \pm 3.93$& $3.300 \pm 0.200$&$37.800 \pm 2.300$&$118.41 \pm 3.93$\\
\bottomrule

\end{tabular}
\end{sc}
\caption{A sensitivity analysis of the performance of the BT kernel with respect to the precision $p$. NLL (top), RMSE (middle), and run time in minutes (bottom) on regression datasets, using a single GPU (Tesla V100-SXM2-16GB for BT and BTE and Tesla V100-SXM2-32GB for the other methods). Omitted results were not run due to limited GPU memory.}
\label{table:precision_sensitivity}
\end{table*}

\subsection{A Simple Performance Improvement}
As shown in Table~\ref{table:unique}, the heuristic rule for setting $p$ results in bit strings that preserve the uniqueness of the raw training set for most datasets. However, for the 3dRoad, Song, and Buzz datasets, the percentage of unique bit strings is very low relative to the percentage of unique training rows. We note that this observation is from exploratory data analysis (EDA) and can be made before model fitting.

There are two key ways that mapping training rows to bit strings can lower the percent of unique examples. First, a low percentage of unique bit strings can arise if a given input feature has a non-uniform input distribution which can lead to multiple different inputs mapping to the same discrete bucket. To alleviate, this problem EDA can be used to determine a suitable feature transformation. For example, we apply a strictly increasing, piecewise linear transformation to the data, mapping the $k$\textsuperscript{th} percentile of each dimension to $k/100$, for $k \in \{0, 10, 20, ..., 100\}$. This resembles an empirical cumulative density function (ECDF).
% one could transform in input feature based on its empirical cumulative density function (ECDF) to create a more uniform distribution of input values.
Second, if the precision $p$ is set too low, then multiple different inputs can map to the same  bit string. A simple solution is to iteratively increase the precision $p$ based on the difference between the percentage of unique rows in the raw training set and unique bit strings under precision $p$. The last column of Table~\ref{table:unique} reports the percentage of unique bit strings on the 3dRoad, Song, and Buzz datasets after applying transforming each feature through its ECDF and increasing the precision if need be. These changes (all made through EDA) lead to significantly more unique bit strings. Table~\ref{table:tf_performance} shows the performance on these datasets under the proposed ECDF transformations and settings of $p$. We find that the BTE outperforms all methods on 3dRoad and Buzz with respect to RSME and NLL under these proposed changes. The performance on Song also improves.

\begin{table*}[!h]
\setlength{\tabcolsep}{4pt}
\tiny
\centering
\begin{sc}
\begin{tabular}{cccccccc}
\toprule
Dataset & $n$& $d$& $p$&\% Unique Training Rows & \% Unique Bit Strings under $p$& ... After Transform \& w/ $p_\text{new}$&$p_\text{new}$\\
\midrule
PoleTele & 9,600 & 26& 6& 99.9&98.8\\
Elevators & 10,623 &18&8 & 100.0 & 100.0\\
Bike & 11,122 & 17& 8 &100.0&100.0\\
Kin40k & 25,600 & 8&8 &100.0&100.0\\
Protein &29,267 &9& 8&97.4&96.5\\
KeggDir &31,248 &20&8&36.1&36.1\\
CTslice &34,240 &385&1&99.9&95.5\\
KEGGU& 40,708&27&6&32.5&32.3\\
3DRoad & 278,319&3&8&99.3&15.9&97.7&16\\
Song & 329,820&90&2&100.0&46.2&100&2\\
Buzz & 373,280&77& 2&98.5&0.5&94.6&3\\
HouseElec &1,311,539 &11&8&100.0&100.0\\
\bottomrule
\end{tabular}
\end{sc}
\caption{Percentage of unique training inputs (over all training inputs) and bit strings (over all training inputs) under the precision $p$ set according to the heuristic rule. We reported the means across 3 training, validation, test set partitions. The last column shows percentage of unique bit strings after transforming each feature through its ECDF.
}
\label{table:unique}
\end{table*}

\begin{table*}[!h]
\setlength{\tabcolsep}{4pt}
\tiny
\centering
\begin{sc}
\begin{tabular}{cccccccc}
\toprule
Dataset & $n$& $d$& BTE & BT & Mat\'{e}rn (BBMM) & SGPR & SVGP\\
\midrule
3DRoad & 278,319&3&$\bm{-1.285} \pm 0.008$ &$-1.267 \pm 0.005$ &  0.909 ± 0.001 & 0.943 ± 0.002 & 0.697 ± 0.002\\
Song & 329,820&90&$1.306 \pm 0.011$& $1.331 \pm 0.003$& $\bm{1.206}$ ± 0.024 & 1.213 ± 0.003 &1.417 ± 0.000 \\
Buzz & 373,280&77&$\bm{0.017} \pm 0.002$ &$0.034 \pm 0.000$ &0.267 ± 0.028 &$0.106$ ± 0.008 &0.224 ± 0.050 \\
\midrule
3DRoad & 278,319&3& $\bm{0.104} \pm 0.002$&$\bm{0.105} \pm 0.002$ & $\bm{0.101}$ ± 0.007&  0.661 ± 0.010 & 0.481 ± 0.002 \\
Song & 329,820&90&$0.894 \pm 0.010$& $0.904 \pm 0.012$&  $\bm{0.807}$ ± 0.024& $\bm{0.803}$ ± 0.002 & 0.998 ± 0.000 \\
Buzz & 373,280&77& $\bm{0.249} \pm 0.001$& $0.253 \pm 0.000$& $\bm{0.288}$ ± 0.018& $0.300$ ± 0.004 & $0.304$ ± 0.012 \\
\midrule

3DRoad & 278,319&3&\multicolumn{2}{c}{$14.2 \pm 0.2$} &126.37$^*$ ± 20.92$^*$& 12.01 ± 5.51& 34.09 ± 3.19\\

Song & 329,820&90& \multicolumn{2}{c}{$11.5 \pm 0.7$}&33.79$^*$ ± 10.45$^*$&$\bm{7.89}$ ± 3.12 &39.55 ± 3.08\\

Buzz & 373,280&77&\multicolumn{2}{c}{$82.8 \pm 5.7$} &571.15$^*$ ± 66.34$^*$& $\bm{29.25}$ ± 18.33 &46.35 ± 2.93\\
\bottomrule

\end{tabular}
\end{sc}
\caption{NLL (top), RMSE (middle), and run time in minutes (bottom) on regression datasets, using a single GPU (Tesla V100-SXM2-16GB for BT and BTE and Tesla V100-SXM2-32GB for the other methods) \emph{after transforming each feature through its ECDF and using precision} $p_\text{new}$. The asterisk indicates an estimate of the time from the reported training time on 8 GPUS, assuming linear speedup in number of GPUs and independent noise in training times per GPU.
}
\label{table:tf_performance}
\end{table*}
\FloatBarrier
\subsection{Comparison with Simplex-GP}
\FloatBarrier
Here, we compare against Simplex-GP \citep{pmlr-v139-kapoor21a} using the 4 datasets that are common to both papers. We find that BTE outperforms Simplex GP on all datasets with respect to NLL and on 3 out of 4 datasets with respect to RMSE. Furthermore, Simplex-GP is not the top performing method on any dataset.
\begin{table*}[!h]
\setlength{\tabcolsep}{4pt}
\tiny
\centering
\begin{sc}
\begin{tabular}{ccccccccc}
\toprule
Dataset & $n$& $d$& BTE & BT & Mat\'{e}rn (BBMM) & SGPR & SVGP & Simplex-GP\\
\midrule
Elevators & 10,623 &18&$0.649 \pm 0.032$ &$0.646 \pm 0.023$& 0.619 ± 0.054 &0.580 ± 0.060 &$\bm{0.519}$ ± 0.022&1.600 ± 0.020\\
Protein &29,267 &9& $\bm{0.781} \pm 0.023$&$0.845 \pm 0.026$ &  1.018 ± 0.056& 0.970 ± 0.010& 1.035 ± 0.006&  1.406 ± 0.048\\
KeggDir &31,248 &20& $-1.031 \pm 0.020$&$-1.029 \pm 0.021$ & $-0.199 \pm 0.381$ & $\bm{-1.123} \pm 0.016$ & $-0.940 \pm 0.020$ &0.797 ± 0.031\\
HouseElec &1,311,539 &11&$\bm{-2.569} \pm 0.006$ &$-2.492 \pm 0.012$ & $-0.152$ ± 0.001 & --- & $-1.010$ ± 0.039 &0.756 ± 0.075\\
\midrule
Elevators & 10,623 &18&$0.478 \pm 0.021$ &$0.476 \pm 0.018$ &$\bm{0.394}$ ± 0.006 &0.437 ± 0.018 & $\bm{0.399}$ ± 0.009 &0.510 ± 0.018\\
Protein &29,267 &9& $0.608 \pm 0.008$& $0.623 \pm 0.011$&$\bm{0.536}$ ± 0.012 & 0.656 ± 0.010 & 0.668 ± 0.005 &0.571 ± 0.003\\
KeggDir &31,248 &20& $\bm{0.086} \pm 0.003$&$\bm{0.086} \pm 0.003$ &$\bm{0.086}$ ± 0.005 &  0.104 ± 0.003 &0.096 ± 0.001 &0.095 ± 0.002\\
HouseElec &1,311,539 &11& $\bm{0.029} \pm 0.001$& $\bm{0.029} \pm 0.001$& 0.055 ± 0.000& --- &0.084 ± 0.005 &0.079 ± 0.002\\
\bottomrule

\end{tabular}
\end{sc}
\caption{NLL (top), RMSE (middle), and run time in minutes (bottom) on regression datasets, using a single GPU (Tesla V100-SXM2-16GB for BT and BTE and Tesla V100-SXM2-32GB for the other methods). The asterisk indicates an estimate of the time from the reported training time on 8 GPUS, assuming linear speedup in number of GPUs and independent noise in training times per GPU.
All columns except BT and BTE come from \citet{wang2019exact} and Simplex-GP results come from \citet{pmlr-v139-kapoor21a}.
}
\label{table:nllextended}
\end{table*}

\FloatBarrier

\section{Societal Impacts} \label{sec:impacts}
More accurate machine learning models allow for better decision making. Improvements in decision making can lead to improved societal outcomes in many applications such as early disease detection. However when predictions of a machine learning algorithm are confident, we may be more compelled to act on them, which could lead to increased risk in delicate settings. Therefore, confident and wrong predictions, resulting from poor and confident generalization due to overfitting, can lead to worse outcomes. When using binary tree kernels for Gaussian processes, overfitting is possible. That said, Gaussian processes do quantify their uncertainty very naturally, which can be helpful for knowing when to take their predictions with a grain, or a heap, of salt. As with all machine learning models, what outcomes are predicted and how those predictions are used are ultimately decided by to the practitioner, and those decisions contribute to whether the model impacts society in positive or negative ways.
\end{document}